%% file: main.tex
\documentclass[11pt, a4paper, logo, onecolumn, copyright]{pluralisresearchiclr}
\input{macros.tex}
\pdftrailerid{redacted}

\makeatletter
\renewcommand\bibentry[1]{\nocite{#1}{\frenchspacing\@nameuse{BR@r@#1\@extra@b@citeb}}}
\makeatother

\usepackage{kantlipsum, lipsum}
\usepackage{dsfont}
\usepackage{algorithm}
\usepackage{algpseudocode}
\usepackage{adjustbox}
\usepackage{multirow}
\usepackage{nicefrac}

\newcolumntype{R}[2]{%
    >{\adjustbox{angle=#1,lap=\width-(#2)}\bgroup}%
    l%
    <{\egroup}%
}

\usepackage{subcaption}
\usepackage{url}

\graphicspath{{images/}}

\title{AsyncMesh: Fully Asynchronous Optimization for Data and Pipeline Parallelism}

\correspondingauthor{aj@pluralis.ai}

\keywords{Asynchronous Optimization, Sparse Averaging, Data and Pipeline Parallelism, Decentralized Training} 
\paperurl{https://github.com/PluralisResearch/AsyncMesh}

\reportnumber{} 

\renewcommand{\today}{} 

\author{Thalaiyasingam Ajanthan, Sameera Ramasinghe, Gil Avraham, Hadi Mohaghegh Dolatabadi, Chamin P Hewa Koneputugodage, Violetta Shevchenko, Yan Zuo, and Alexander Long \\
Pluralis Research
}

\begin{abstract}
    Data and pipeline parallelism are key strategies for scaling neural network training across distributed devices, but their high communication cost necessitates co-located computing clusters with fast interconnects, limiting their scalability. We address this communication bottleneck by introducing {\em asynchronous updates across both parallelism axes}, relaxing the co-location requirement at the expense of introducing {\em staleness} between pipeline stages and data parallel replicas. To mitigate staleness, for pipeline parallelism, we adopt a weight look-ahead approach, and for data parallelism, we introduce an {\em asynchronous sparse averaging} method equipped with an exponential moving average based correction mechanism. We provide convergence guarantees for both sparse averaging and asynchronous updates. Experiments on large-scale language models (up to {\em 1B parameters}) demonstrate that our approach matches the performance of the fully synchronous baseline, while significantly reducing communication overhead.
\end{abstract}

\begin{document}
\maketitle

\input{text}

\bibliographystyle{plainnat}
\bibliography{main}

\appendix
\input{appendix}

\end{document}

%% file: macros.tex
\usepackage{mathtools}
\usepackage{framed}
\usepackage{pbox}
\usepackage{afterpage}
\usepackage{url}
\usepackage{array}
\usepackage{amsfonts}
\usepackage{multirow}
\usepackage{booktabs}
\usepackage{relsize}
\usepackage{xspace}
\usepackage{caption}
\usepackage{subcaption}

\usepackage{csquotes}

\input{math-commands}

\newcommand{\ignore}[1]{}

\newcommand{\norm}[1]{\left\lVert#1\right\rVert}

\usepackage{framed}	
\usepackage{url}
\usepackage{enumerate}
\usepackage{color}

\usepackage{changepage}

\usepackage{algpseudocode}
\usepackage{algorithm}

\usepackage{amsthm}
\usepackage{thmtools}
\declaretheoremstyle[
  headfont=\bfseries,
  notefont=\bfseries, 
  bodyfont=\itshape,
  headformat={\NAME~\NUMBER\NOTE}
]{customthm}

\theoremstyle{definition}
\theoremstyle{customthm}
\newtheorem{thm}{Theorem}
\newtheorem{lem}{Lemma}

\usepackage{stmaryrd}

\newcommand{\SKIP}[1]{}

\newcommand{\bfalpha}{\boldsymbol{\alpha}}

\newcommand{\bfdelta}{\boldsymbol{\delta}}
\newcommand{\bfxi}{\boldsymbol{\xi}}

\renewcommand{\E}{\mathbb{E}\!}

\newcommand{\id}[1]{\!\left\llbracket#1\right\rrbracket}

\newcommand{\NOTE}[1]{[\textbf{\textcolor{blue}{Note:}}\emph{\textcolor{blue}{#1}}]}

\usepackage{xcolor}

\usepackage{bbm}

\usepackage[page]{appendix}
\usepackage{chngcntr}
\usepackage{etoolbox}

\AtBeginEnvironment{subappendices}{%
\section*{Appendix}
\addcontentsline{toc}{section}{Appendix}
}

\usepackage{xspace}
\makeatletter
\DeclareRobustCommand\onedot{\futurelet\@let@token\@onedot}
\def\@onedot{\ifx\@let@token.\else.\null\fi\xspace}

\def\eg{\emph{e.g}\onedot} 
\def\ie{\emph{i.e}\onedot}

\def\iid{i.i.d\onedot}
\makeatother

\usepackage{enumitem}
\newenvironment{tight_itemize}{
\begin{itemize}[leftmargin=10pt]
  \setlength{\topsep}{0pt}
  \setlength{\itemsep}{0pt}
  \setlength{\parskip}{0pt}
  \setlength{\parsep}{0pt}
}{\end{itemize}}

\def\figref#1{Fig.~\ref{#1}}

\def\secref#1{Sec.~\ref{#1}}
\def\eqref#1{Eq.~(\ref{#1})}

\def\tabref#1{Table~\ref{#1}}

\def\thmref#1{Theorem~\ref{#1}}

\def\tabref#1{Table~\ref{#1}}

\algnewcommand\INPUT{\item[\textbf{Input:}]}%
\algnewcommand\OUTPUT{\item[\textbf{Output:}]}%

\usepackage{fontenc}
\usepackage[acronym]{glossaries} 
\makeglossaries 
\glsdisablehyper
\newacronym{MRF}{mrf}{Markov Random Field}
\newacronym{SGD}{SGD}{Stochastic Gradient Descent}
\newacronym{GD}{gd}{Gradient Descent}
\newacronym{PGD}{pgd}{Projected Gradient Descent}
\newacronym{ICM}{icm}{Iterative Conditional Modes}
\newacronym{DNN}{dnn}{Deep Neural Networks}
\newacronym{NN}{nn}{Neural Network}
\newacronym{PMF}{pmf}{Proximal Mean-Field}
\newacronym{PICM}{picm}{Proximal Iterative Conditional Modes}
\newacronym{BWN}{bwn}{Binary Weight Network}
\newacronym{IP}{ip}{Integer Programming}
\newacronym{fc}{fc}{fully-connected}
\newacronym{REF}{ref}{Reference Network}
\newacronym{KL}{kl}{KL}
\newacronym{S}{s}{S}
\newacronym{LR}{lr}{LR}
\newacronym{KKT}{kkt}{KKT}
\newacronym{MAP}{map}{Maximum a Posteriori}
\newacronym{MDA}{mda}{Mirror Descent Algorithm}
\newacronym{MD}{md}{Mirror Descent}
\newacronym{BOP}{bop}{Binary Optimizer}
\newacronym{EDA}{eda}{Entropic Descent Algorithm}
\newacronym{ED}{ed}{Entropic Descent}
\newacronym{EGD}{egd}{Exponentiated Gradient Descent}
\newacronym{PQ}{pq}{ProxQuant}
\newacronym{STE}{ste}{Straight Through Estimator}
\newacronym{HGD}{hgd}{Hybrid Gradient Descent}
\newacronym{PQL}{pql}{PQL}
\newacronym{ELQ}{elq}{Explicit Loss-error-aware Quantization}
\newacronym{ADMM}{admm}{Alternating Direction Method of Multipliers}
\newacronym{QN}{qn}{Quantization Networks}
\newacronym{BR}{br}{BinaryRelax}
\newacronym{FC}{fc}{FC}
\newacronym{XBM}{xbm}{Cross Batch Memory}
\newacronym{SOP}{sop}{Stanford Online Products}
\newacronym{In-shop}{In-shop}{In-shop Clothes Retrieval}
\newacronym{DF2}{DeepFashion2}{Deep Fashion 2}
\newacronym{GPU}{gpu}{GPU}
\newacronym{AWS}{aws}{AWS}
\newacronym{XBN}{xbn}{Cross Batch Normalization}
\newacronym{AXBN}{axbn}{Adaptive Cross Batch Normalization}
\newacronym{TIMM}{timm}{Pytorch Image Models}
\newacronym{PML}{pml}{Pytorch Metric Learning}
\newacronym{MoCo}{MoCo}{Momentum Contrast}
\newacronym{BN}{BN}{Batch Normalization}
\newacronym{LN}{LN}{Layer Normalization}
\newacronym{L2}{L2}{L2}
\newacronym{MBN}{MBN}{Momentum Batch Normalization}
\newacronym{EMA}{EMA}{Exponential Moving Average}
\newacronym{MP}{MP}{Model Parallelism}
\newacronym{PP}{PP}{Pipeline Parallelism}
\newacronym{DP}{DP}{Data Parallelism}
\newacronym{FIM}{FIM}{Fisher Information Matrix}
\newacronym{NAG}{NAG}{Nesterov Accelerated Gradient}
\newacronym{1F1B}{1F1B}{One-Forward-One-Backward}
\newacronym{FFT}{FFT}{Fast Fourier Transform}
\newacronym{RMSE}{RMSE}{Root-Mean-Square Error}
\newacronym{SPARTA}{SPARTA}{Sparse Parameter Averaging}
\newacronym{AsyncPP}{AsyncPP}{Asynchronous Pipeline Parallel}
\newacronym{WT}{WT}{WikiText}
\newacronym{BC}{BC}{BookCorpus}
\newacronym{OWT}{OWT}{OpenWebText}
\newacronym{FW}{FW}{FineWeb}
\newacronym{MoE}{MoE}{Mixture of Experts}

\newcommand{\wiki}{\acrlong{WT}}
\newcommand{\book}{\acrlong{BC}}
\newcommand{\owt}{\acrlong{OWT}}
\newcommand{\fw}{\acrlong{FW}}
\newcommand{\wikis}{\acrshort{WT}}
\newcommand{\books}{\acrshort{BC}}
\newcommand{\owts}{\acrshort{OWT}}
\newcommand{\fws}{\acrshort{FW}}

\newcommand{\asyncpp}{AsyncPP}
\newcommand{\asyncmesh}{AsyncMesh}
\newcommand{\diloco}{DiLoCo}

\newcommand{\asyncsparta}{Async\acrshort{SPARTA}}
\newcommand{\DP}{DPAvg}
\newcommand{\moe}{\acrshort{MoE}}

\usepackage{pifont}

\usepackage{wrapfig}

\usepackage{listings}

\usepackage{xcolor}

\definecolor{codegreen}{rgb}{0,0.6,0}
\definecolor{codegray}{rgb}{0.5,0.5,0.5}
\definecolor{codepurple}{rgb}{0.58,0,0.82}
\definecolor{backcolour}{rgb}{0.95,0.95,0.92}
\definecolor{darkblue}{rgb}{0.0, 0.0, 0.5}

\lstdefinestyle{mystyle}{
  backgroundcolor=\color{backcolour}, commentstyle=\color{codegreen},
  keywordstyle=\color{magenta},
  numberstyle=\tiny\color{codegray},
  stringstyle=\color{codepurple},
  basicstyle=\ttfamily\footnotesize,
  breakatwhitespace=false,         
  breaklines=true,                 
  captionpos=b,                    
  keepspaces=true,                 
  numbers=left,                    
  numbersep=5pt,                  
  showspaces=false,                
  showstringspaces=false,
  showtabs=false,                  
  tabsize=2
}
\renewcommand{\bfdelta}{\boldsymbol{\Delta}}

%% file: math-commands.tex
\usepackage{amsmath,amsfonts,bm}

\def\figref#1{figure~\ref{#1}}

\def\secref#1{section~\ref{#1}}

\def\eqref#1{equation~\ref{#1}}

\def\1{\bm{1}}

\def\rvd{{\mathbf{d}}}
\def\rve{{\mathbf{e}}}

\def\rvg{{\mathbf{g}}}

\def\rvw{{\mathbf{w}}}
\def\rvx{{\mathbf{x}}}

\def\rmD{{\mathbf{D}}}

\def\rmW{{\mathbf{W}}}

\DeclareMathAlphabet{\mathsfit}{\encodingdefault}{\sfdefault}{m}{sl}
\SetMathAlphabet{\mathsfit}{bold}{\encodingdefault}{\sfdefault}{bx}{n}

\def\gB{{\mathcal{B}}}

\def\gD{{\mathcal{D}}}

\def\gO{{\mathcal{O}}}

\def\gS{{\mathcal{S}}}

\newcommand{\etens}[1]{\mathsfit{#1}}

\newcommand{\E}{\mathbb{E}}

\newcommand{\R}{\mathbb{R}}



%% file: text.tex
\section{Introduction}

Distributed optimization approaches enable large-scale model training by partitioning computation across multiple interconnected devices, primarily through \gls{DP}~\cite{goyal2017accurate,li2020pytorch,zhao2023pytorch} and \gls{MP}~\cite{huang2019gpipe,krizhevsky2017imagenet,shoeybi2019megatron}. While \gls{DP} replicates the model across devices with partitioned data, \gls{MP} partitions the model itself across devices. Combining these approaches allows training foundation models at the frontier scale~\cite{dubey2024llama,liu2024deepseek}. However, both \gls{DP} and \gls{MP} rely on high-bandwidth interconnects due to high communication costs, limiting distributed training to co-located computing clusters. Therefore, scaling beyond a centralized infrastructure requires addressing communication bottlenecks inherent to both \gls{DP} and \gls{MP} setups.

Existing approaches mitigate communication costs via compression~\cite{bernstein2018signsgd,wang2023cocktailsgd,wangni2018gradient} and/or overlapping computation with communication using scheduling~\cite{narayanan2019pipedream,qi2023zero} or asynchronous methods~\cite{agarwal2011distributed,stich2019error}. We focus on asynchronous methods, which by design, offer full utilization of the distributed infrastructure and support heterogeneous hardware, by eliminating synchronization barriers. 
We study asynchronous training in a {\em 2D mesh} combining \gls{DP} and \gls{PP}~\cite{huang2019gpipe} -- a special case of \gls{MP} that partitions the model into sequential stages. 

In a synchronous mesh, each stage within a pipe\footnote{A pipe is a sequence of stages that form a full model.} is optimized in a lock-step manner, ensuring that the weights and gradients are synchronized at each step, and the model replicas are explicitly averaged by pausing optimization. In contrast, {\bf \asyncmesh{}} eliminates both synchronization points allowing decoupled optimization in each stage as well as decoupling optimization in a pipe and averaging across replicas. This enables {\em continuous data processing without communication bottlenecks}. \figref{fig:mesh} illustrates this. Asynchronism, however, introduces {\em staleness} in model weights, necessitating correction mechanisms to ensure consensus among model replicas and convergence~\cite{agarwal2011distributed,ajanthan2025asyncpp,stich2019error,zheng2017asynchronous}. 

\begin{figure*}[t]
    \centering
    \vspace{-2ex}
    \includegraphics[width=0.95\linewidth]{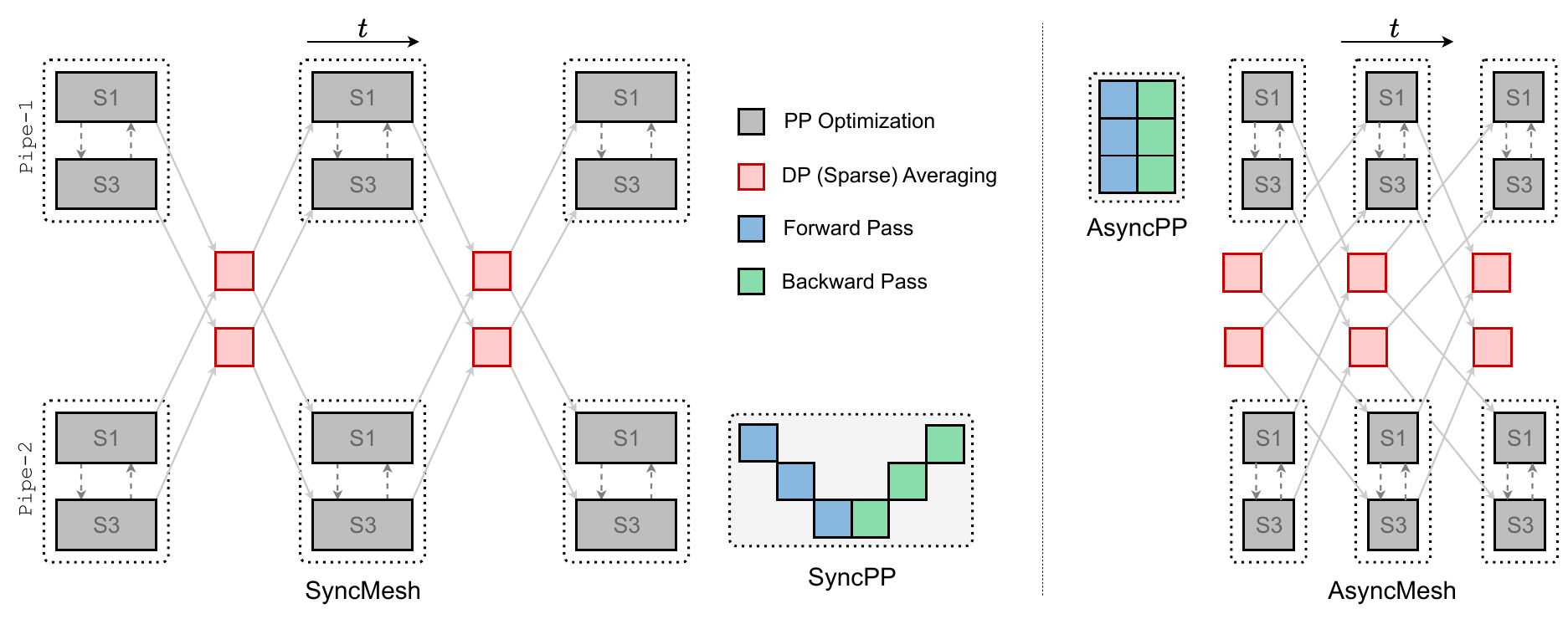}    
    \vspace{-2ex}
    \caption{\em SyncMesh vs AsyncMesh, for a 3-stage, 2-\gls{DP} replica setup (only 2 stages: S1 and S3, are shown for clarity). Notably, AsyncMesh eliminates idle time due to communication for both \gls{PP} and \gls{DP}. In synchronous \gls{DP}, devices are idle while parameters are averaged, whereas asynchronous \gls{DP} eliminates this idle time by using the ``old'' average. Moreover, in Async\gls{PP}, each stage alternates between forward and backward passes without any communication delay. 
    }
    \vspace{-3ex}
    \label{fig:mesh}
\end{figure*}

For \gls{PP}, the staleness is stage-dependent~\cite{narayanan2019pipedream}, and extrapolating the previous update direction using the Nesterov method is shown to effectively compensate for this delay~\cite{ajanthan2025asyncpp}. Due to its simplicity and empirical effectiveness, we adopt this strategy~\cite{ajanthan2025asyncpp} to optimize each pipe asynchronously. For \gls{DP}, the staleness depends on the interconnect bandwidth and the data transfer volume. Therefore, to minimize staleness, we {\em exchange only 5\% of weights} across replicas, similar to sparse averaging~\cite{sparta,fournier2024wash}, and communicate them asynchronously to {\em completely mask the \gls{DP} communication}. To address resulting weight discrepancies, we design an {\em \gls{EMA}} based correction mechanism that approximates the average staleness. Theoretical analysis under the homogeneous setting (identical hyperparameters and \iid data splits) shows that our method ensures consensus among replicas on expectation, despite asynchronous sparse averaging.

We validate our approach on large-scale language modeling tasks with decoder-only transformer architectures~\cite{Karpathy2022,vaswani2017attention}. Experiments demonstrate that our method {\em matches the performance of the fully synchronous baseline}, while eliminating the communication overhead via sparse asynchronous updates. Notably, for the first time, we train a {\em 1B parameter model} to convergence in \asyncmesh{} matching the performance of the synchronous alternative. Our results show the feasibility of distributed training over bandwidth constrained interconnects using asynchronous optimization.

Our contributions can be summarized as follows:
\vspace{-2ex}
\begin{tight_itemize}
\item We introduce a new {\bf \asyncmesh{}} setup where {\em both \gls{DP} and \gls{PP} are asynchronous}, and present a fully asynchronous method that matches the performance of the fully synchronous method.
\item We prove theoretical convergence in the presence of staleness in a homogeneous setup where only a small subset of weights is communicated between \gls{DP} replicas.
\item We empirically show the robustness and scalability of our approach across varying architectures, datasets, subset sizes, staleness levels, \gls{DP} communication intervals, device heterogeneity, and \gls{PP} $\times$ \gls{DP} mesh sizes. 
\end{tight_itemize}

\section{Preliminaries}
We first define the 2D mesh configuration with \gls{DP}~\cite{goyal2017accurate,li2020pytorch} and \gls{PP}~\cite{huang2019gpipe}, and then briefly review the \gls{AsyncPP} method~\cite{ajanthan2025asyncpp} and \gls{SPARTA}~\cite{sparta,fournier2024wash} upon which we build our work. We refer the reader to the respective papers for more details.

\subsection{Problem Setup: 2D Mesh}
Our focus is decentralized training, where the devices are geographically distributed and connected via low-bandwidth interconnects (\eg., the internet), however, our method is applicable for the centralized training setup as well, but its benefits may be limited. To this end, we present the setup and method in an infrastructure agnostic manner.

Let $P$ be the number of pipeline stages and $m$ be the number of replicas, and consider a symmetric 2D mesh, where each stage is replicated $m$ times, constituting $Pm$ devices (or workers) in total. 
A {\em pipe} is a sequence of stages that form the full model, which can be defined using its forward and backward functions. Consider a pipe (or replica) $i\in\{1,\ldots,m\}$, and let the forward function for stage $j$ be 
$f_{ij}\coloneq f_{j}(\rvw_{ij}; \rvx_{j-1})$ with weights $\rvw_{ij}$, and input $\rvx_{j-1}$. Then, the forward and backward functions for pipe $i$ is:
\begin{align}
    F(\rmW_i; \rvx_0) &\coloneqq f_{iP} \circ f_{iP-1}\circ \cdots \circ f_{i1}(\rvx_0)\ ,\ &\mbox{fwd}\ ,\\\nonumber
    G(\rmW_i; \rve_{P}) &\coloneqq g_{i1} \circ g_{i2}\circ \cdots \circ g_{iP}(\rve_{P})\ ,\ &\mbox{bwd}\ ,
\end{align}
where $\rmW_i = \{\rvw_{iP}, \ldots, \rvw_{i1}\}$ and $\rvx_0$ is an input data point.
Here, $g_{ij}\coloneq g_{j}(\rvw_{ij}; \rve_{j})$ is the backward function for stage $j$ corresponding to $f_{ij}$ and $\rve_{P}$ is the error signal corresponding to $\rvx_0$. 
In our 2D mesh, there are $m$ such pipes, and the goal is to optimize the following consensus objective~\cite{boyd2011distributed}:
\begin{align}\label{eq:consensus}
    \min_{\rmW\in\R^d} F(\rmW;\gD) \coloneqq &\min_{\rmW_i\in\R^d}\, \sum_{i=1}^m F(\rmW_i; \gD_i)\ ,\\\nonumber
    &\text{s.t.}\quad \rmW_i = \rmW\ ,\quad\forall\,i\in \{1,\ldots,m\}\ ,
\end{align}
where $\gD_i$ is an \iid subset\footnote{Unlike federated learning, \iid data splits is a realistic assumption in decentralized training.} of the dataset $\gD$ and $d$ is the number of learnable parameters of the full model.
Each pipe is optimized independently on its own data split and using its own optimizer, and its weights are synchronized periodically with other pipes -- typically after every optimization step. 

\subsection{\acrlong{PP}}
\acrfull{PP} methods~\cite{pp-survey} provide communication efficient ways to optimize the model parameters in a pipe. Specifically, pipeline scheduling strategies such as GPipe~\cite{huang2019gpipe}, \acrshort{1F1B}~\cite{narayanan2021efficient}, and ZeroBubble~\cite{qi2023zero} design the order of processing forward and backward passes of microbatches in a pipe to reduce communication overhead between stages and improve device utilization. In contrast, asynchronous \gls{PP} methods~\cite{ajanthan2025asyncpp,narayanan2019pipedream} eliminate the requirement to synchronize the weights and gradients across stages at each update step, offering full pipeline utilization.\footnote{Asynchronous \gls{PP} methods~\cite{ajanthan2025asyncpp,pipedream-2bw} assume comparable compute and communication times per stage, but in bandwidth limited scenarios, compression~\cite{ramasinghe2025protocolmodelsscalingdecentralized} is needed to fully eliminate \gls{PP} communication overhead.}

\vspace{-1ex}
\paragraph{\acrlong{AsyncPP}.}\label{sec:asyncpp}
The main challenge in the \acrfull{AsyncPP} method~\cite{ajanthan2025asyncpp} is the discrepancy between the gradients at a particular stage and the corresponding weights due to asynchronous updates. Specifically, since the weights of a particular stage are updated multiple times between the forward and backward passes of a microbatch, outdated gradients are used for weight updates. Formally, the weight update for pipe $i$ and stage $j$ can be written as (omitting optimizer specifics):
\vspace{-1ex}
\begin{equation}
    \rvw_{ij}^{t+1} = \rvw_{ij}^t - \eta_t\,\nabla f_{j}(\rvw_{ij}^{t-\delta_j};\gB_i^{t-\delta_j})\ ,
    \label{eq:gd}
\end{equation}
where $\eta_t > 0$ is the learning rate, $\gB_i^{t-\delta_j}$ is the minibatch, and $\delta_j \ge 0$ is the stage-dependent constant delay due to asynchronous \gls{PP} updates. 

To compensate for this delay, \asyncpp{}~\cite{ajanthan2025asyncpp} extrapolates the last update direction ${(\rvw_{ij}^t - \rvw_{ij}^{t-1})}$ using the \gls{NAG} framework~\cite{nesterov1983method}, and shows that it acts as gradient delay correction in the weight space. This was shown to surpass the synchronous GPipe method on some language modeling tasks, and we employ this approach to optimize each pipe in our experiments.

\subsection{\acrlong{DP}}
In a typical \acrfull{DP} setup~\cite{goyal2017accurate,li2020pytorch}, each worker computes the gradient of the full model on a minibatch and communicates it to the central parameter server. The server averages the gradients from all workers, performs an optimization step of the server model, and distributes the updated parameters to the workers for the next iteration. This is equivalent to performing gradient based optimization using a larger minibatch, and the consensus constraint in \eqref{eq:consensus} is maintained. 

We consider a serverless scenario, which better reflects a decentralized training setup~\cite{ryabinin2023swarm}. In this, each worker performs a local update using its optimizer and averages the updated weights with others~\cite{mcmahan2017fedavg}, or equivalently averages gradients before applying local updates~\cite{ryabinin2021moshpit}. Synchronizing weights (instead of gradients) is becoming popular as they can be communicated infrequently to reduce data transfer~\cite{douillard2025streaming,douillard2023diloco,reddi2020fedopt}. To this end, we consider the setup where only a small subset is averaged periodically~\cite{sparta,fournier2024wash,lee2023partial} which performs similarly to traditional \gls{DP}.

\vspace{-1ex}
\paragraph{\acrlong{SPARTA}.}
In \acrfull{SPARTA}~\cite{sparta,fournier2024wash}, after each local update, a small subset of parameters (\eg, 5\%) is averaged across workers, reducing data transfer. Adopting this to our 2D mesh is straightforward, as the local update is performed on each pipe, and the sparse averaging is done between stage replicas with a randomly sampled subset. 
Formally, sparse averaging for stage $j$ with subset $\gS^t_j\subset \{1,\ldots,d_j\}$ can be written as (omitting optimizer specifics):
\begin{align}\label{eq:sparta}
    \hat{\rvw}_{ij}^{t} &= \rvw_{ij}^{t-1} - \eta_{t-1}\,\nabla f_{j}(\rvw_{ij}^{t-1};\gB_i^{t-1})\ ,\qquad\forall\,i\ ,\\\nonumber 
    w_{ij:\mu}^{t} &= \left\{\begin{array}{ll}
\frac{1}{m}\sum_i\hat{w}_{ij:\mu}^{t}\ , \quad&\mbox{if $\mu\in \gS^t_j$}\ ,\\[1ex]
\hat{w}_{ij:\mu}^{t}\ , \quad&\mbox{if $\mu\notin \gS^t_j$}\ ,\end{array} \right.\qquad\ \ \ \forall\,i,\mu\ .  
\end{align}
Here, $\eta_t > 0$ is the learning rate, $\gB_i^{t-1}$ is the minibatch, 
and $w_{ij:\mu}^t$ is the $\mu$-th element of vector $\rvw_{ij}^t$. 


\section{Our Method: Optimization in \asyncmesh{}}
We consider \asyncmesh{}, where both \gls{PP} and \gls{DP} axes in the consensus optimization problem (\eqref{eq:consensus}) are optimized asynchronously.
By making the mesh fully asynchronous, our setup ensures {\em full pipeline utilization} throughout training without interruption, while encouraging consensus among model replicas. Our setup is clearly illustrated in \figref{fig:mesh}.
For optimizing each pipe, we employ the \gls{AsyncPP} method (refer to \secref{sec:asyncpp}) that uses a variant of \gls{NAG} for delay correction within a pipe. For \gls{DP}, we introduce an asynchronous version of sparse parameter averaging that eliminates the communication overhead due to \gls{DP}.

\subsection{Asynchronous \acrlong{SPARTA}}
Let us consider a particular stage, and drop the stage index for simplified notation.
In asynchronous \gls{DP}, the local updates in each worker (\ie, \gls{PP} optimization) do not wait for the averaging operation (\ie, \gls{DP} communication) to complete. Therefore, the weights at each stage are updated multiple times while the averaging operation is performed between the stage replicas. Thus, the averaged weights are older than the weights at each worker, which leads to {\em staleness}. Let $\tau$ be the corresponding delay, then, the delayed sparse averaging can be written as:
\begin{align}\label{eq:asparta}
    w_{i:\mu}^{t} &= \left\{\begin{array}{ll}
\bar{w}_{i:\mu}^{t-\tau}\ , \quad&\mbox{if $\mu\in \gS^{t-\tau}$}\ ,\\[1ex]
\hat{w}_{i:\mu}^{t}\ , \quad&\mbox{if $\mu\notin \gS^{t-\tau}$}\ ,\end{array} \right.\qquad\ \ \ \forall\,i,\mu\ ,
\end{align}
where $\bar{\rvw}^t = \frac{1}{m}\sum_i\hat{\rvw}_{i}^{t}$ denotes the averaged weights.
Compared to \eqref{eq:sparta}, the only difference is that elements in the subset $\gS^{t-\tau}$ are set to the {\em old average} $\bar{w}_{i:\mu}^{t-\tau}$ instead of the new one at time $t$. This delay is detrimental to training as 1) it ignores the local updates between $t\!-\!\tau$ and $t$ for the subset, and 2) the weight vector has a discrepancy as some weights correspond to time $t\!-\!\tau$ and the rest at time $t$.
Therefore, it is essential to compensate for this delay.

\subsection{Delay Correction via Estimating Average Staleness}\label{sec:ema}
Our idea is to approximate the new average $\bar{\rvw}^t$ using the old average $\bar{\rvw}^{t-\tau}$ and the estimated {\em average staleness}. Specifically, we estimate the average staleness in each stage independently using \acrfull{EMA} of staleness (\ie, weight differences) throughout training. Precisely, we estimate the new average as,
\begin{align}\label{eq:ema}
    \rvd_{i}^t\id{\gS^{t-\tau}} &= (1-\lambda_t)\, \rvd_{i}^{t-1}\!\id{\gS^{t-\tau}} + 
    \lambda_t\left(\hat{\rvw}_{i}^t\id{\gS^{t-\tau}} - \hat{\rvw}_{i}^{t-\tau}\!\id{\gS^{t-\tau}}\right)\ ,\ &\mbox{\gls{EMA}}\ ,\\\nonumber
    \tilde{\rvw}_i^{t}\id{\gS^{t-\tau}} &= \bar{\rvw}^{t-\tau}\!\id{\gS^{t-\tau}} + \rvd_i^t\id{\gS^{t-\tau}}\ ,\ &\mbox{est. avg.}\ ,
\end{align}
where $\id{\cdot}$ denotes the indicator and $\lambda_t\in(0,1)$ is the \gls{EMA} coefficient. Then, the delay corrected asynchronous sparse averaging takes the following form:
\begin{equation}
    w_{i:\mu}^{t} = \left\{\begin{array}{ll}
\tilde{w}_{i:\mu}^{t}\ , \quad&\mbox{if $\mu\in \gS^{t-\tau}$}\ ,\\[1ex]
\hat{w}_{i:\mu}^{t}\ , \quad&\mbox{if $\mu\notin \gS^{t-\tau}$}\ ,\end{array} \right.\qquad\ \ \ \forall\,i,\mu\ .
\end{equation}
Intuitively, the idea here is that $\rvd_i^t$ being the \gls{EMA} of staleness, robustly estimates the average staleness. Therefore, $\tilde{\rvw}_i^t$ approximates the average at time $t$. Concretely,
\begin{align}
    \tilde{\rvw}_i^{t} = \bar{\rvw}^{t-\tau} + \rvd_i^t &\approx^{\etens{a}} \bar{\rvw}^{t-\tau} + \E\left[\hat{\rvw}_{i}^t - \hat{\rvw}_{i}^{t-\tau}\right] \\\nonumber &\approx^{\etens{b}} \bar{\rvw}^{t-\tau} + \bar{\rvw}^{t} - \bar{\rvw}^{t-\tau} = \bar{\rvw}^{t}\ .
\end{align}
Here, the approximation $\etens{a}$ is due to \gls{EMA} being a stochastic approximation of the expected value~\cite{robbins1951stochastic}, and $\etens{b}$ is due to the empirical average of \gls{DP} replicas.
Note the accuracy of the \gls{EMA} approximation depends on the delay $\tau$ (lower the better) and the smoothness of the optimization trajectory.\footnote{Since \gls{EMA} is estimated across many time steps, a smoother trajectory (\ie, slow drift in average staleness) leads to a better approximation, conditions for this are stated in the next section.} Whereas the approximation error of $\etens{b}$ reduces with increasing number of \gls{DP} replicas.

Note that sparse averaging is performed after every local step in our description so far. However, this requirement can be relaxed and averaging can be performed every $K$ steps, similar to~\cite{douillard2023diloco,mcmahan2017fedavg}. In this setup, our approach is a strict generalization of the concurrent idea of eager \diloco{}~\cite{kale2025eager}, in which, all the parameters are communicated, the delay $\tau=K$, and ${\rvd_i^t = \frac{1}{m}\left(\hat{\rvw}_{i}^t - \hat{\rvw}_{i}^{t-\tau}\right)}$.

\subsection{Theoretical Insights}
Recall that our aim is to optimize the consensus objective \eqref{eq:consensus} with asynchronous updates for both \gls{PP} and \gls{DP}. \asyncpp{}~\cite{ajanthan2025asyncpp} proved convergence with fixed delay, providing a theoretical justification for the single pipeline setup. However, it is unclear if sparse averaging ensures consensus, and since we make it asynchronous, the impact of staleness also needs to be studied.

To this end, we take a first step in theoretically understanding the conditions required for achieving consensus for asynchronous sparse averaging. First, we show that sparse averaging ensures consensus on expectation if the learning rate is chosen proportional to the subset size. Then, we provide a theoretical insight showing that 
under standard assumptions of stochastic approximation~\cite{robbins1951stochastic} with an identical setup for each replica, the 
\gls{EMA} approximates the average staleness and consequently ensures consensus on expectation. Both of these provide a theoretical justification that our approach ensures consensus, and when coupled with the standard convergence proof for \gls{SGD}~\cite{bottou2018optimization}, show that our approach converges to a fixed point of \eqref{eq:consensus}.

Suppose the consensus error is defined as:
\vspace{-1ex}
\begin{equation}
    \norm{\bfdelta^t}^2 \coloneqq \sum_{i=1}^m \norm{\rvw_i^t - \rvw^t}^2\ ,\ \mbox{where $\rvw^t = \frac{1}{m}\rvw_i^t$}\ .\\[-1.5ex]
\end{equation}
We first show that, on expectation, the consensus error vanishes, \ie, all model replicas converge to their average. Now, by invoking the standard convergence proof for \gls{SGD}~\cite{bottou2018optimization} on the averaged model, one can show convergence for sparse parameter averaging.
\begin{thm}[Sparse averaging ensures consensus]\label{thm:sparta}
    Let $f$ be a $L$-smooth function, the stochastic gradient be an unbiased estimate of $\nabla f$ and have bounded variance, and $p>0$ be the averaging probability for an element ${\mu\in\{1,\ldots,d\}}$, then, for an appropriate choice of learning rate $\eta_t>0$, the consensus error for updates in \eqref{eq:sparta} diminishes on expectation, \ie, $\lim_{t\to \infty}\E\left[\norm{\bfdelta^{t}}^2 \right] = 0$.
\end{thm}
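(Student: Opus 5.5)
We derive a one-step contraction recursion for the consensus error of the updates in \eqref{eq:sparta} and then apply a standard Robbins--Monro style sequence lemma. The coordinates decouple, so we work coordinate-wise. Fix $\mu$ and write $\delta_{i:\mu}^t = w_{i:\mu}^t - w_\mu^t$, where $\rvw^t$ is the replica average. Let $\hat{\rvw}^t$ denote the average of the $\hat{\rvw}_i^t$. Averaging does not change the mean, so $\rvw^t = \hat{\rvw}^t$, and the local step gives
\begin{equation*}
\hat{w}_{i:\mu}^{t} - \hat{w}_{\mu}^{t} = \delta_{i:\mu}^{t-1} - \eta_{t-1}\bigl(g_{i:\mu}^{t-1} - \bar{g}_{\mu}^{t-1}\bigr),
\end{equation*}
where $g_i^{t-1} = \nabla f(\rvw_i^{t-1};\gB_i^{t-1})$ and $\bar{g}^{t-1}$ is its average over replicas. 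After averaging, this deviation is multiplied by the indicator $\id{\mu\notin\gS^t}$. We assume, as in the statement, that this indicator is independent of everything else and equals one with probability $1-p$.

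Taking expectations over the subset sampling first, and summing over $i$ and $\mu$, yields
\begin{equation*}
\E\norm{\bfdelta^t}^2 = (1-p)\,\E\sum_i\norm{\bfdelta_i^{t-1} - \eta_{t-1}(g_i^{t-1}-\bar{g}^{t-1})}^2 .
\end{equation*}
We expand the square. For the cross term we use Young's inequality with parameter $\epsilon$, chosen so that $(1-p)(1+\epsilon)\le 1-p/2$, for instance $\epsilon = p/(2(1-p))$. This gives
\begin{equation*}
\E\norm{\bfdelta^t}^2 \le (1-\tfrac{p}{2})\,\E\norm{\bfdelta^{t-1}}^2 + C_p\,\eta_{t-1}^2\,\E\sum_i\norm{g_i^{t-1}-\bar{g}^{t-1}}^2,
\end{equation*}
with $C_p = (1-p)(1+1/\epsilon)$.

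The remaining task is to bound the gradient-disagreement term. We split each stochastic gradient into the true gradient plus noise:
\begin{equation*}
g_i = \nabla f(\rvw_i) + \xi_i .
\end{equation*}
Since the data splits are \iid, the objectives coincide. By unbiasedness and bounded variance, the noise part contributes at most $m\sigma^2$ in expectation. By $L$-smoothness, the deterministic part satisfies $\sum_i\norm{\nabla f(\rvw_i)-\overline{\nabla f}}^2\le \sum_i\norm{\nabla f(\rvw_i)-\nabla f(\rvw)}^2\le L^2\norm{\bfdelta}^2$; the first inequality holds because the mean minimizes the sum of squared deviations. Combining these,
\begin{equation*}
\E\norm{\bfdelta^t}^2 \le \bigl(1-\tfrac{p}{2} + 2C_pL^2\eta_{t-1}^2\bigr)\E\norm{\bfdelta^{t-1}}^2 + 2C_pm\sigma^2\eta_{t-1}^2 .
\end{equation*}

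The \emph{appropriate learning rate} is now chosen. We require $\eta_t^2\le p/(8C_pL^2)$, so that $\eta_t$ scales with $p$ as the text suggests, and also $\eta_t\to 0$, for example a Robbins--Monro schedule. The recursion then becomes
\begin{equation*}
u_t\le (1-\tfrac{p}{4})\,u_{t-1} + c\,\eta_{t-1}^2 .
\end{equation*}
Unrolling it gives
\begin{equation*}
u_t \le (1-\tfrac{p}{4})^t u_0 + c\sum_s (1-\tfrac{p}{4})^{t-1-s}\eta_s^2 .
\end{equation*}
The second term is a convolution of a summable geometric kernel with a null sequence, so it tends to zero. Hence $\lim_{t\to\infty}\E\norm{\bfdelta^t}^2=0$.

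The main obstacle is the gradient-disagreement step. It must be controlled by $\norm{\bfdelta}^2$ and not by a bounded-gradient assumption, so that it can be absorbed into the contraction factor $1-p$. This absorption is what forces the learning rate to be small relative to $p/L$. A second point requiring care is that the noise term only vanishes with decaying $\eta_t$: under a constant step size the same argument yields only an $\gO(\eta^2\sigma^2/p)$ neighborhood. We therefore state the result for decaying step sizes.
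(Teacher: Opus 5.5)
Your proof is correct and has the same skeleton as the paper's. Both use the pre-averaging deviation $\bfdelta_i^{t-1}-\eta_{t-1}(\rvg_i^{t-1}-\bar{\rvg}^{t-1})$, the $(1-p)$ shrinkage from independent coordinate sampling, a bound on the gradient-disagreement term by $m\sigma^2$ plus a multiple of $L^2\norm{\bfdelta}^2$, and a step size of order $p/L$.

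The real difference is how you handle the cross term $-2\eta_t\sum_i\bfdelta_i^t\cdot\rvg_i^t$.
The paper removes it structurally. Unbiasedness replaces $\rvg_i^t$ by $\nabla f(\rvw_i^t)$, and $\sum_i\bfdelta_i^t=0$ lets it subtract $\nabla f(\bar{\rvw}^t)$. Smoothness then bounds the remainder by $L\,\E\norm{\bfdelta^t}^2$, giving the factor $(1-p)(1+2\eta_tL)$ with no $1/p$ penalty on the noise.
Your Young's inequality with $\epsilon\sim p$ needs neither the conditioning argument nor smoothness at that point. The price is that it inflates the $\eta^2$ term by $1+1/\epsilon\sim 1/p$. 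This is harmless for the limit with $\eta_t\to 0$. For a constant step, however, your recursion gives a noise floor of order $m\sigma^2\eta^2/p^2$, not the $\gO(\eta^2\sigma^2/p)$ you quote. That sharper floor is what the paper's treatment of the cross term yields.

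Conversely, your closing argument is more rigorous than the paper's.
The paper folds $(1-p)CL^2\eta_t^2\,\E\norm{\bfdelta^t}^2$ into ``$\gO(\eta_t^2)$'', which tacitly presumes $\E\norm{\bfdelta^t}^2$ is bounded, and then appeals to Robbins--Monro only loosely. You absorb that term into the contraction factor and unroll the recursion explicitly. This closes the argument properly. It also makes clear that only $\eta_t\to 0$ is needed, since your $p$-dependent threshold is then met eventually anyway; neither $\sum_t\eta_t=\infty$ nor $\sum_t\eta_t^2<\infty$ is required.

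Your mean-minimizes-squared-deviation step also gives $L^2$ where the paper's three-way split gives $4L^2$, but this only affects constants.
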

\vspace{-2ex}
\begin{proof}
    We first show that sparse averaging shrinks the consensus error by a factor of $(1\!-\!p)$ on expectation. Then, we derive a recursion on $\E\left[\norm{\bfdelta^{t}}^2 \right]$ and choose a learning rate  proportional to $p$ to ensure that it is a contraction. Detailed proof is provided in the appendix.
\end{proof}
\vspace{-2ex}
Consensus of partial averaging has been previously studied in federated learning for pre-determined subsets~\cite{lee2023partial}, where a similar relationship between the learning rate and subset size is observed. This suggests convergence may be slow for small subset sizes (equivalently, small $p$) due to small learning rates. However, the tightness of this result is unclear, and we leave any such analysis for future work. In practice, we use the standard learning rate value and do {\em not} adjust it based on the subset size.

For the theoretical analysis of asynchronous sparse averaging, we consider a {\em homogeneous setup}\footnote{Following standard practice we use this simplifying assumption for the theoretical analysis, however, we provide experiments in the  heterogeneous setup showing the merits of our method.} with the same initialization, optimizer hyperparameters, and \iid data subsets, in which we can show that the expected staleness can be independently estimated in each replica. Suppose the expected staleness and its drift be:
\begin{equation}
 \rmD^t \coloneqq \E\left[\hat{\rvw}_i^t - \hat{\rvw}_i^{t-\tau}\right]\ ,\qquad  \bfalpha_t \coloneqq \rmD^{t} - \rmD^{t-1}\ .  
\end{equation}
With standard assumptions from stochastic approximation theory~\cite{robbins1951stochastic,robbins1971convergence} such as diminishing \gls{EMA} coefficient, and diminishing drift in expected staleness, we show that asynchronous (\ie, delayed) averaging ensures consensus on expectation. This, together with \thmref{thm:sparta}, provides a theoretical justification for asynchronous sparse averaging.
\begin{thm}[Delayed averaging with \gls{EMA} ensures consensus]\label{thm:ema}
    Consider a homogeneous setup where the average staleness $\rmD^t$ is bounded and its drift is diminishing, \ie, $\lim_{t\to\infty}\norm{\bfalpha_t} = 0$. Then, the \gls{EMA} based delay correction as in~\eqref{eq:ema} with $\lambda_t$ satisfying 
    $\sum_t \lambda_t = \infty$, $\sum_t \lambda_t^2 < \infty$, and $\sum_t \frac{\norm{\bfalpha_t}^2}{\lambda_t} < \infty$ ensures consensus on expectation, \ie, $\lim_{t\to \infty}\E\left[\norm{\bfdelta^{t}}^2 \right] = 0$.
\end{thm}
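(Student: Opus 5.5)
The argument reduces to \thmref{thm:sparta}: the EMA estimate $\tilde{\rvw}_i^t$ is an asymptotically unbiased surrogate for the fresh average $\bar{\rvw}^t$, so the delayed update behaves like synchronous sparse averaging plus a perturbation that vanishes in mean square. It runs in three steps: a Robbins--Monro analysis of the EMA, a bound on the error of the estimated average, and a perturbed version of the consensus recursion from \thmref{thm:sparta}.

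\textbf{Step 1: the EMA tracks the expected staleness.} Write $\rve_i^t \coloneqq \rvd_i^t - \rmD^t$, restricted to the coordinates in $\gS^{t-\tau}$. From \eqref{eq:ema},
\begin{equation*}
\rve_i^t = (1-\lambda_t)\,\rve_i^{t-1} - (1-\lambda_t)\,\bfalpha_t + \lambda_t\,\bfxi_i^t ,
\qquad
\bfxi_i^t \coloneqq \hat{\rvw}_i^t - \hat{\rvw}_i^{t-\tau} - \rmD^t .
\end{equation*}
The noise $\bfxi_i^t$ has zero mean by definition of $\rmD^t$. Boundedness of $\rmD^t$ and bounded gradient variance give $\E\norm{\bfxi_i^t}^2 \le \sigma^2$. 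Squaring and using Young's inequality with weight $\lambda_t$ on the drift term gives
\begin{equation*}
\E\norm{\rve_i^t}^2 \le (1-\lambda_t)\,\E\norm{\rve_i^{t-1}}^2 + \frac{C\,\norm{\bfalpha_t}^2}{\lambda_t} + C\lambda_t^2\sigma^2 + (\text{cross term}).
\end{equation*}
Under the three summability conditions, the Robbins--Siegmund lemma (or the standard deterministic lemma $u_t \le (1-\lambda_t)u_{t-1} + b_t$ with $\sum_t b_t < \infty$ and $\sum_t \lambda_t = \infty$) yields $\E\norm{\rve_i^t}^2 \to 0$.

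\textbf{Step 2: the estimated average converges to the true average.} By definition,
\begin{equation*}
\tilde{\rvw}_i^t - \bar{\rvw}^t = \rve_i^t + \bigl(\rmD^t - (\bar{\rvw}^t - \bar{\rvw}^{t-\tau})\bigr).
\end{equation*}
Homogeneity (same initialization, same hyperparameters, i.i.d.\ data) means all replicas share the same law, so $\rmD^t = \E[\bar{\rvw}^t - \bar{\rvw}^{t-\tau}]$. The second term is therefore a centred empirical average over $m$ replicas. I would handle it in one of two ways: define consensus relative to the expected average, so that it vanishes in expectation, or bound its variance by $\sigma^2/m$ and combine this with the diminishing step sizes implicit in Step 1. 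Either route gives $\E\norm{\tilde{\rvw}_i^t - \bar{\rvw}^t}^2 \to 0$.

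\textbf{Step 3: consensus recursion.} The update replaces the coordinates in $\gS^{t-\tau}$ with $\bar{\rvw}^t + \rvx_i^t$, where $\rvx_i^t \coloneqq \tilde{\rvw}_i^t - \bar{\rvw}^t$ is the perturbation controlled in Step 2. Repeating the argument of \thmref{thm:sparta}, sparse averaging contracts $\E\norm{\bfdelta^t}^2$ by a factor $(1-p)$, so
\begin{equation*}
\E\norm{\bfdelta^t}^2 \le (1-p)(1+\epsilon)\,\E\norm{\bfdelta^{t-1}}^2 + C\eta_t^2 + C_\epsilon \sum_i \E\norm{\rvx_i^t}^2 ,
\end{equation*}
with $\epsilon$ small enough that $(1-p)(1+\epsilon) < 1$. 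This is a strict contraction driven by vanishing forcing terms, so $\E\norm{\bfdelta^t}^2 \to 0$.

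The main obstacle is Step 2. It needs independence between the random subset $\gS^{t-\tau}$ and the weights, and it needs the empirical replica average to agree with its expectation, which is exactly where the homogeneity assumption has to carry the argument. Step 1 also has a technical point: the cross term between $\rve_i^{t-1}$ and $\bfxi_i^t$ vanishes only if $\bfxi_i^t$ is conditionally mean-zero given the past. I would obtain this by conditioning on $\gS^{t-\tau}$ and the filtration up to time $t-\tau$, and otherwise absorb the cross term using the bounded-variance assumption.
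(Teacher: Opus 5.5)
Your Step 1 is essentially the paper's whole computation: the same recursion for $\rvd_i^t - \rmD^t$, Young's inequality with weight $\lambda_t$, and the Robbins--Siegmund conclusion. Routing the rest through the fresh average $\bar{\rvw}^t$, however, creates a genuine gap at Step 2. The extra term is $\rmD^t - (\bar{\rvw}^t - \bar{\rvw}^{t-\tau}) = -\frac{1}{m}\sum_k \bfxi_k^t$, which is the replica average of the raw staleness noise. The EMA coefficients $\lambda_t$ never act on this term. Nothing in the hypotheses makes the learning rate, and hence $\bfxi_k^t$, vanish. Under your own uniform bound $\E\norm{\bfxi_i^t}^2 \le \sigma^2$, its mean square is of order $\sigma^2/m$ for roughly independent replicas and does not tend to zero for fixed $m$. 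So $\E\norm{\tilde{\rvw}_i^t - \bar{\rvw}^t}^2 \to 0$ is not available. The forcing term $C_\epsilon\sum_i\E\norm{\rvx_i^t}^2$ in Step 3 then does not vanish, and your contraction only gives a bounded $\limsup$, not consensus.

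The missing observation is that this term is identical for all replicas, so it cancels in the consensus error. This is where the paper starts. Because the stale average $\bar{\rvw}^{t-\tau}$ is shared, on the overwritten coordinates $\bfdelta_i^t = \rvd_i^t - \bar{\rvd}^t$ with $\bar{\rvd}^t = \frac{1}{m}\sum_k \rvd_k^t$. The paper then replaces $\bar{\rvd}^t$ by $\rmD^t$ and runs exactly your Step 1 recursion. The rigorous form of that replacement is $\sum_i\norm{\rvd_i^t - \bar{\rvd}^t}^2 \le \sum_i\norm{\rvd_i^t - \rmD^t}^2 = \sum_i\norm{\rve_i^t}^2$, because the replica mean minimizes $c \mapsto \sum_i\norm{\rvd_i^t - c}^2$. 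In your notation, use $\rvx_i^t - \bar{\rvx}^t = \rve_i^t - \bar{\rve}^t$ in Step 3 in place of $\rvx_i^t$, and Step 2 disappears. Your first route in Step 2 can be rescued only through this inequality: measure each $\tilde{\rvw}_i^t$ against the common reference $\bar{\rvw}^{t-\tau} + \rmD^t$, which gives exactly $\rve_i^t$.

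Two smaller points. First, Step 3 imports the learning-rate conditions of Theorem~\ref{thm:sparta}, which are not hypotheses here; the paper argues only on the overwritten coordinates and needs only the conditions on $\lambda_t$. Second, your fallback of absorbing the $\rve_i^{t-1}\cdot\bfxi_i^t$ cross term via bounded variance leaves a non-summable $O(\lambda_t\sigma^2)$ forcing term. You therefore genuinely need that cross term to vanish, which the paper takes from Lemma~\ref{lem:ema1}.
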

\vspace{-2ex}
\begin{proof}
    Intuitively, in a homogeneous setup, a particular weight trajectory is equally probable for all replicas, and hence, the expected staleness can be independently estimated in each replica. Moreover, classical stochastic approximation theory~\cite{robbins1951stochastic} shows that \gls{EMA} approximates the expected value. Here, since $\rmD^t$ is time varying, we use the additional assumption on the interplay between the drift $\bfalpha_t$ and $\lambda_t$ to bound the consensus error. Detailed proof is provided in the appendix.
\end{proof}
\vspace{-2ex}
The assumption of diminishing $\norm{\bfalpha_t}$ and $\sum_t \frac{\norm{\bfalpha_t}^2}{\lambda_t} < \infty$ can be interpreted as the optimization trajectory being smoother such that the average staleness changes slowly with time and its difference diminishes towards convergence. This also implicitly puts a restriction on the allowed delay $\tau$. To this end, sparse averaging with asynchronous updates is appealing as we can reduce the delay $\tau$ by only communicating a small subset over a bandwidth limited interconnect.

Note the theory recommends diminishing \gls{EMA} coefficient $\lambda_t$. In our implementation, we initialize $\lambda_t$ to $0.5$ and after 1k iterations, we gradually decay it to $0.01$ using a cosine schedule. This slightly improves over a constant $\lambda_t$, and we keep this schedule fixed for all the experiments. In the appendix, we have practically verified that for our method, the EMA estimate variance vanishes and the consensus error approaches zero, as predicted by our theory.

\section{Related Work}
\paragraph{Communication Efficient \gls{DP} Methods.}
\gls{DP} is a traditional distributed training setup~\cite{goyal2017accurate,li2020pytorch}, where each device computes the gradients of the full model in its data split, aggregates the gradients on a central server, performs a central optimization step, and distributes the updated model parameters back to the devices for the next iteration. To reduce the communication overhead, the gradients can be compressed using low-rank approximation~\cite{vogels2019powersgd,zhao2024galore}, sparsification~\cite{wang2017efficient,wangni2018gradient}, and quantization~\cite{bernstein2018signsgd,wang2023cocktailsgd}. Alternatively, in the serverless setup, each device individually updates the model and periodically synchronizes the model parameters, where partial averaging~\cite{sparta,fournier2024wash,lee2023partial} and/or infrequent communication as in \diloco{}~\cite{douillard2025streaming,douillard2023diloco,reddi2020fedopt} reduce communication overhead.

\paragraph{Asynchronous \gls{DP} Methods.} 
Unlike synchronous methods, asynchronous \gls{DP} methods can fully eliminate the communication overhead. These are well-studied for the parameter server setup that communicates the gradients, and many gradient delay correction mechanisms have been developed~\cite{agarwal2011distributed,dp-async-survey,stich2019error}. Notable methods that improve over the simple asynchronous SGD~\cite{recht2011hogwild} include delay dependent learning rate~\cite{barkai2019gap,mishchenko2206asynchronous}, gradient forecasting with second-order information~\cite{zheng2017asynchronous}, and look-ahead in the weight space~\cite{dana}. Apart from this, training dynamics of such asynchronous \gls{DP} methods have also been analyzed~\cite{asynchrony-begets-momentum}. These methods are not directly applicable in our setup, as we communicate weights instead of gradients between replicas. Asynchronous methods are underexplored in this context, although there are a few recent empirical methods designed for \diloco{}~\cite{ajanthan2025momentum,kale2025eager,liu2024asynchronous}, we show some of them~\cite{kale2025eager} can be viewed as special cases of our method.

\begin{figure*}[t]
    \centering
    \begin{subfigure}{0.33\linewidth}
    \includegraphics[width=0.99\linewidth]{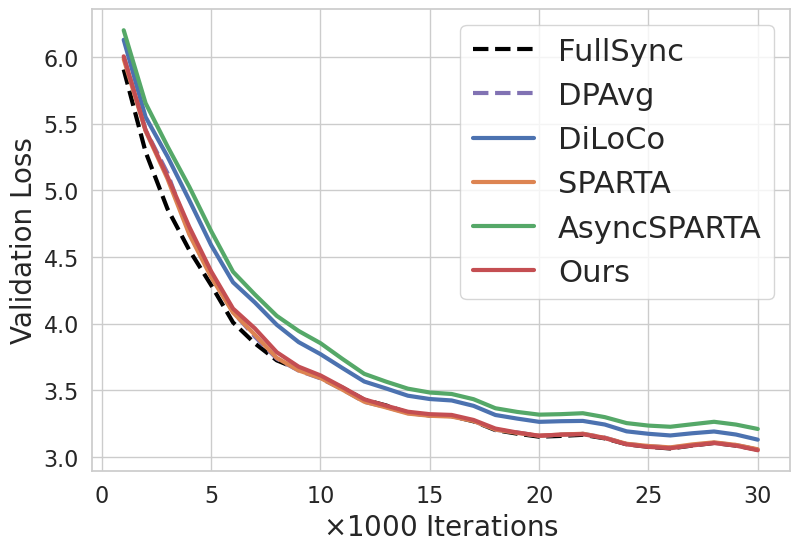}
    \vspace{-4ex}
    \caption{$4\times 2$ mesh}
    \end{subfigure}%
    \begin{subfigure}{0.33\linewidth}
    \centering
    \includegraphics[width=0.99\linewidth]{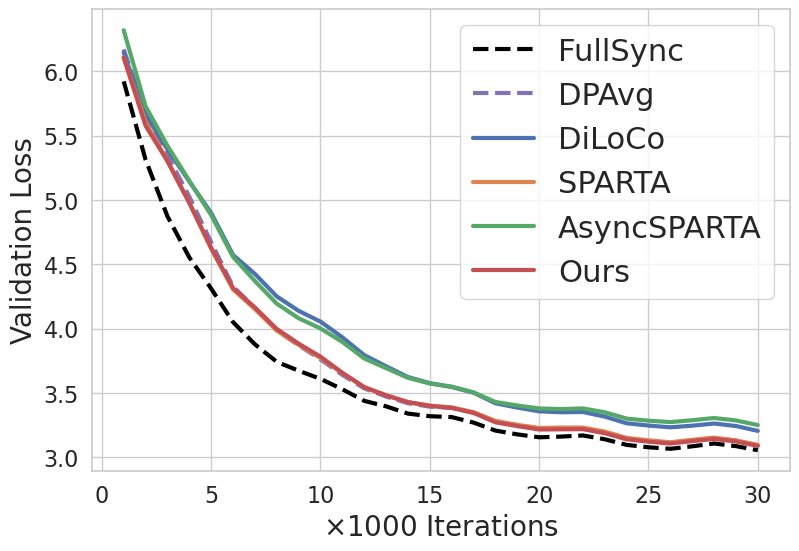}
    \vspace{-4ex}
    \caption{$8\times 2$ mesh}
    \end{subfigure}%
    \begin{subfigure}{0.33\linewidth}
    \includegraphics[width=0.99\linewidth]{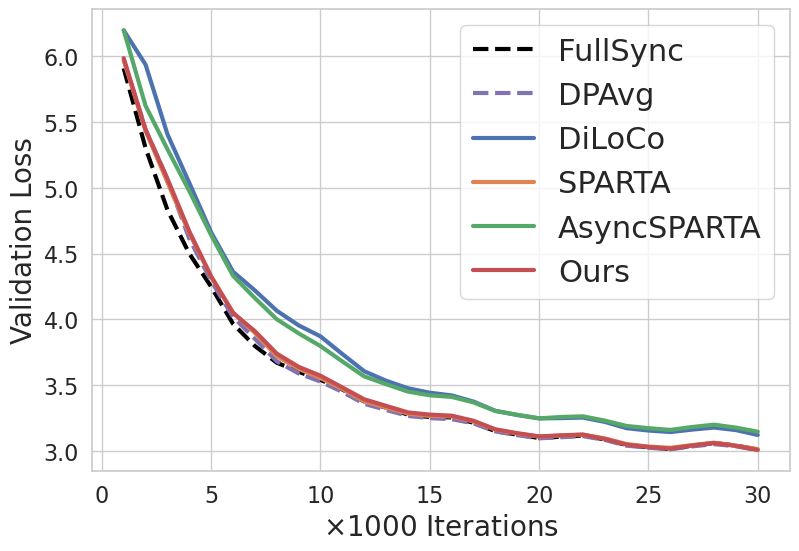}
    \vspace{-4ex}
    \caption{$4\times 2$ mesh, \moe{}}
    \end{subfigure}
    \vspace{-1ex}
    \caption{\em Results on \wiki{} with varying mesh configurations with \asyncpp{} for all methods except FullSync. In all scenarios, our method matches the performance of the fully synchronous method, while outperforming the fully asynchronous baseline \asyncsparta{}.
    }
    \vspace{-2ex}
    \label{fig:main}
\end{figure*}

\paragraph{Asynchronous \gls{PP} Methods.}
Asynchronous \gls{PP} methods eliminate the synchronization bottleneck in \gls{PP}~\cite{pp-survey} to achieve 100\% pipeline utilization at the cost of gradient staleness. In \gls{PP}, weight stashing is used to ensure correct backpropagation is performed~\cite{narayanan2019pipedream,pipedream-2bw}, however, the discrepancy (\ie, staleness) between weights and gradients still needs to be compensated. For this, many correction mechanisms such as learning rate discounting~\cite{yang2021pipemare}, direct weight prediction~\cite{spec-train,guan2019xpipe}, and extrapolation in the weight space~\cite{ajanthan2025asyncpp,zuo2025exploring} have been developed.

In this work, for the first time, we consider \asyncmesh{}, where both \gls{DP} and \gls{PP} are asynchronous. For \gls{PP}, we adopt the recent weight extrapolation method~\cite{ajanthan2025asyncpp}, and for \gls{DP}, we combine sparse averaging~\cite{sparta} with asynchronous updates to fully eliminate the \gls{DP} communication overhead.

\section{Experiments}
\subsection{Experimental Setup} 
We evaluate on four large-scale language modelling datasets, namely, \acrfull{WT}~\cite{wikitext}, \acrfull{BC}~\cite{bookcorpus}, \acrfull{OWT}~\cite{owt}, and \acrfull{FW}~\cite{penedo2024fineweb}, using decoder-only architectures with varying mesh configurations, denoted by {\tt PP-stages $\times$ DP-replicas}. 
Our architecture is based on NanoGPT~\cite{Karpathy2022} with no dropout, and we also test a \gls{MoE} version by replacing every second transformer block with the \moe{} layer. The base model has a context length of 1024, an embedding dimension of 768, 12 attention heads, and 12 layers, with approximately 163M parameters. 
We use the GPT2 tokenizer~\cite{radford2019language} and train from scratch.
For Async\gls{PP}, NAdamW optimizer~\cite{nadam} with momentum $0.99$ is used as per~\cite{ajanthan2025asyncpp}, and all other hyperparameters are set to the standard values (refer to appendix), and kept constant for all the experiments. 

{\em Our aim is to show that neither asynchronous updates for both \gls{PP} and \gls{DP}, nor the sparse averaging for \gls{DP}, deteriorate the validation performance, in various configurations.} For fair comparison, we compute the validation loss (and perplexity) on the model averaged across all \gls{DP} replicas (\ie, consensus model), for all the methods. Primarily, we compare our method against {\bf FullSync}, which is the ideal synchronized setup {\em without any communication efficient scheduling} for \gls{PP} and only utilizes $\frac{1}{P}$ fraction of the pipeline compute~\cite{huang2019gpipe,yang2021pipemare}.
In addition, we evaluate the following synchronous \gls{DP} methods: \DP{} that averages all the parameters at every step, \gls{SPARTA}~\cite{sparta} that averages a small subset, \diloco{}~\cite{douillard2023diloco} that performs infrequent communication, and \asyncsparta{} that performs asynchronous sparse averaging without delay correction. For sparse averaging methods, the subset size is 5\%, the averaging interval is 1, and the asynchronous methods incur a 10-step delay, unless specified otherwise. 

Our method is implemented in PyTorch~\cite{paszke2019pytorch} using the publicly available codes of \gls{SPARTA}\footnote{\url{https://github.com/matttreed/diloco-sim}}, and \asyncpp{}\footnote{\url{https://github.com/PluralisResearch/AsyncPP}}. Unless otherwise specified, all experiments use the base architecture described above and are performed on the \wiki{} dataset. All our experiments are performed on a system equipped with 8 A100 GPUs (p4d.24 in AWS), and where needed, multiple such instances were used.

        
    

\subsection{Main Results}
We analyze the validation loss trajectories for different mesh configurations and architectures with \asyncpp{} in \figref{fig:main}. In all scenarios, our method outperforms \asyncsparta{} and matches the performance of FullSync.  As shown in the appendix, the behaviour is similar for synchronous \gls{PP} updates as well. Except \asyncsparta{} and \diloco{}, all methods show near identical performance in most cases. Note \diloco{} is a prominent method in the \gls{DP} setup with full model in each replica, however, it seems to be inferior in the mesh setup with \asyncpp{}. To our knowledge, this is the first time \diloco{} is tested with \asyncpp{}.
%
\leavevmode\newline
\begin{wrapfigure}{r}{0.5\textwidth}
\vspace{-0.5ex}
    \centering
    \small    
    \begin{tabular}{l|cccc}
        \toprule
        Method   & \wikis{} & \books{} & \owts{} & \fws{} \\
        \midrule
        FullSync & 21.23 & 35.99 & \textbf{35.71} & \textbf{36.77} \\
        \DP{} & 21.26 & 35.24 & 35.97& 36.81 \\
        \midrule
        \acrshort{SPARTA} & 21.30 & 35.15 & 35.73& 37.10\\
        \asyncsparta{} & 24.80 & 37.83 & 41.41& 43.20 \\
        
        \midrule
        Ours & \textbf{21.14} & \textbf{35.09} & 36.13 & 37.31\\
    
        \bottomrule
    \end{tabular}
    \caption{\em Validation perplexity scores at 30k iterations for the $4 \times 2$ mesh. Our method outperforms \asyncsparta{}, and matches FullSync while eliminating the \gls{DP} communication overhead. Except FullSync, all other methods use \asyncpp{}.
    }
    \label{tab:main}
    \vspace{-2ex}
\end{wrapfigure}
In \tabref{tab:main}, we report the validation perplexities on multiple datasets after 30k iterations for the $4\times 2$ mesh. Our method outperforms \asyncsparta{} by {\bf 3 -- 6} points, and yields similar perplexities as FullSync, even surpassing it in 2 out of 4 datasets. 
This is remarkable as our method is fully asynchronous and only averages 5\% of the parameters at each iteration. Note these results are with $2$ \gls{DP} replicas, and as discussed in \secref{sec:ema} and shown in \figref{fig:gap}, {\em the more replicas, the better for our method}. 
For completeness, we trained to the compute-optimal point~\cite{hoffmann2022training} on \fw{}, where the perplexities are 19.92 for FullSync, and 20.10 for ours, confirming the merits of our method beyond doubt.
These results demonstrate that our delay correction method effectively compensates for staleness. 


\begin{figure*}[t]
    \centering
    \includegraphics[width=0.99\linewidth]{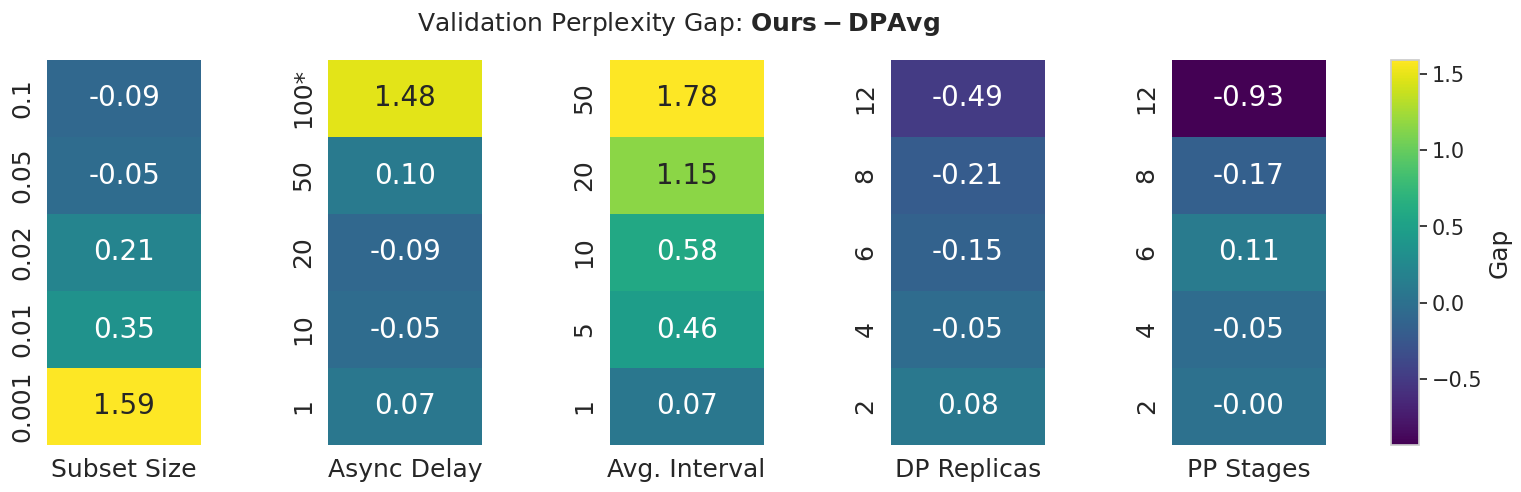}
    \vspace{-1ex}
    \caption{\em Perplexity gap (lower the better) between our method and \DP{} on \wiki{} for different configurations. Our method is robust to a range of configurations and scales favourably with the number of \gls{DP} replicas and \gls{PP} stages. Here, 100* indicates, {\tt async-delay = 10} with {\tt avg-interval = 10}, simulating 100-step effective delay, as {\tt async-delay = 100} did not converge.
    }
    \vspace{-2ex}
    \label{fig:gap}
\end{figure*}

\subsection{Varying Configurations}
\vspace{-1ex}
To test the robustness of our method in a variety of setups, we consider the base model in the following default setup: {\tt subset-size = 5\%}, {\tt async-delay = 10}, {\tt avg-interval = 1}, {\tt DP-replicas = 4}, {\tt PP-stages = 4}, and change one criterion at a time. To isolate the effect of our asynchronous sparse averaging, we compare the validation perplexity against \DP{}, which synchronously averages all parameters, while using \asyncpp{} for pipeline optimization in both methods.

As reported in \figref{fig:gap}, our method is robust for a wide range of configurations, and it is consistently better than \DP{} in many scenarios. Notably, our method is robust for subset size as low as 1\% (only a fractional change in perplexity), tolerates delay up to 50 steps, and works reasonably even if we average every 10 steps (\ie, $10\times$ less communication). More encouraging results are that {\em our method becomes better with a larger mesh} -- with a larger number of \gls{DP} replicas or a larger number of \gls{PP} stages, the Gap: {Ours - \DP{}} becomes more negative. This shows the superior scalability of our method compared to full synchronous averaging.

Note that our theory predicts (refer to \secref{sec:ema}) that a larger number of \gls{DP} replicas would yield a better approximation of average staleness, leading to better delay correction. These results seem to corroborate that. Meanwhile, \asyncpp{} is shown to degrade with more \gls{PP} stages~\cite{ajanthan2025asyncpp}, however, the gains from our sparse averaging may help offset this.

\vspace{-1.5ex}
\paragraph{Practical Implications.}
Compared to \gls{SPARTA}, our method achieves $\mathbf{1.5\times}\text{ -- }\mathbf{3.7\times}$ {\bf speed-up}\footnote{This speed-up is achieved on p4 instances, which have datacenter grade interconnects, and in decentralized setups the speed-up is proportionally higher with lower interconnect bandwidth.}, solely due to asynchronous sparse averaging, as per our runtime measurements. Here, with larger meshes and model sizes, yield better speed-ups (refer to appendix for detailed results and discussion).
Moreover, the above results show that our method tolerates a 50-step delay with only a 5\% subset size, reducing communication by $10\times$ per iteration (subset and indices need to be exchanged). For simplicity, assuming 1s per forward-backward pass in a stage, this gives 50s for \gls{DP} communication -- enough to support up to 1.5B parameters (FP32) per stage over a 100 Mbps connection, highlighting the viability of decentralized training over the internet.


\begin{figure}
    \centering
    \begin{subfigure}{0.45\linewidth}
    \includegraphics[width=0.99\linewidth]{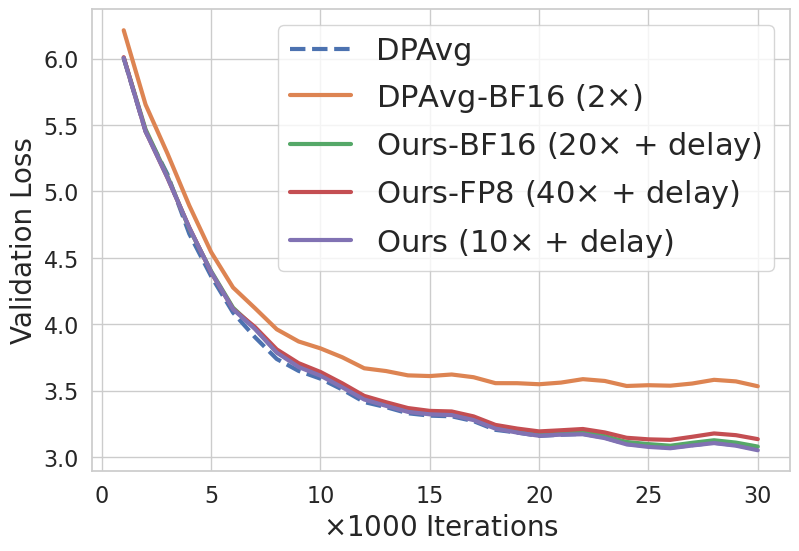}
    \end{subfigure}%
    \begin{subfigure}{0.45\linewidth}
    \includegraphics[width=0.99\linewidth]{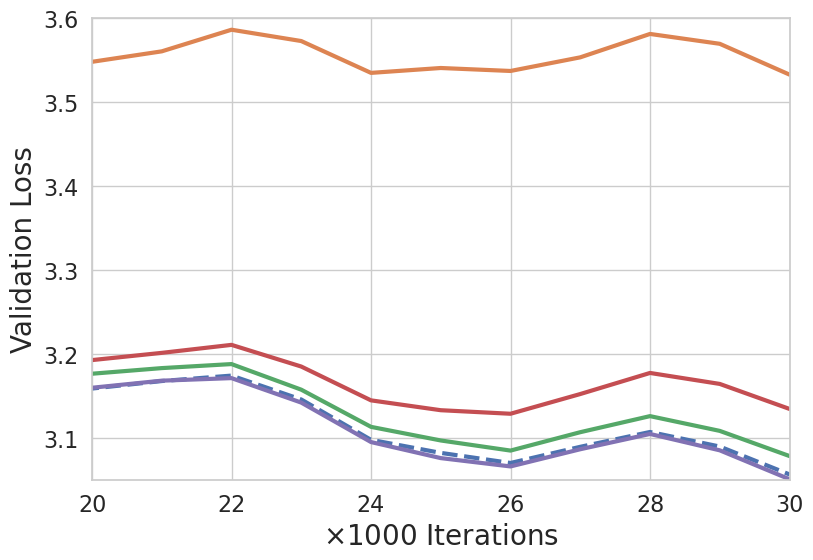}
    \end{subfigure}
    \vspace{-2ex}
    \caption{\em Effect of quantization for different methods, for the base model with $4\times 2$ mesh on \wiki{}. Quantization degrades the performance for \DP{} (\DP{}-{\em FP8} did not converge) while sparse averaging (even with 10-step delay) is robust to it. While intriguing, it may be explained by the fact that since the quantization error is introduced only for a small subset (5\% in this case) at each iteration, the effect of quantization on training is negligible.
    }
    \label{fig:quant}
\end{figure}

\vspace{-1.5ex}
\paragraph{Robustness of Sparse Averaging.}
An intriguing observation is that, our method diverges with subset sizes $\ge\!40\%$ and performs similarly for the subset size range $5\%-30\%$. While, this is counterintuitive, it may indicate the robustness properties of sparse averaging to temporary weight inconsistencies (due to only a small subset being averaged, the overall error introduced to the averaged model is small), also noted in the original paper~\cite{sparta}. To further corroborate this hypothesis, we applied quantization to our method and \DP{}; As shown in \figref{fig:quant}, while our method is robust up to FP8 quantization, \DP{} showed degradation even with BF16 and diverged for FP8. This not only makes our method $\bf{40\times}$ {\bf communication efficient} than \DP{}, but also makes sparse averaging a necessary component of AsyncMesh. Refer to appendix for detailed results.
%
%
\leavevmode\newline
\begin{wrapfigure}{r}{0.5\textwidth}
\vspace{-3ex}
    \centering
    \includegraphics[width=0.85\linewidth]{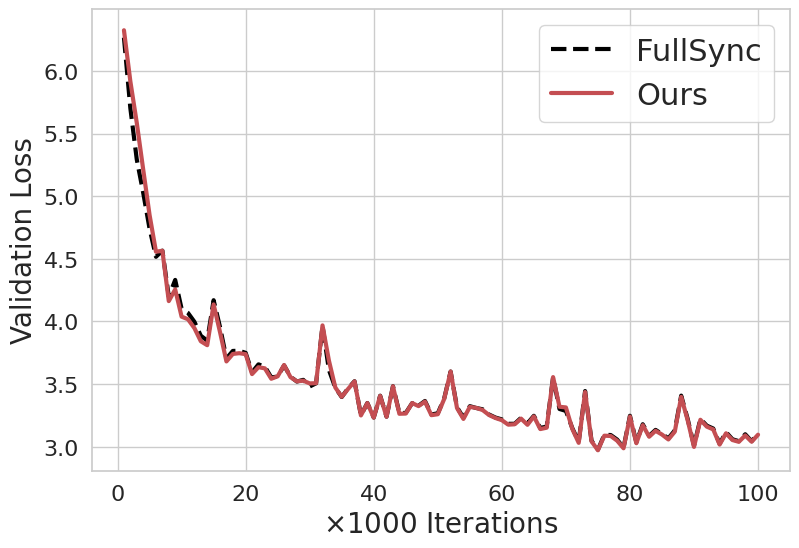}
    \vspace{-2ex}
    \caption{\em Results on the 1B parameter model on \fw{} for $4\times 2$ mesh. Our method matches FullSync, similar to the base model.
    }
    \vspace{-2ex}
    \label{fig:1b}
\end{wrapfigure}
\vspace{-6ex}
\subsection{Increasing the Model Size}
\vspace{-1ex}
To demonstrate the scalability of our approach, we train a {\em 1B parameter model} in the asynchronous 2D mesh. We maintain the number of stages at 4, but increase the embedding dimension to 2304, with 24 attention heads. 
As illustrated in~\figref{fig:1b}, the results align with those of the base model. Specifically, our approach matches FullSync throughout training, yielding validation perplexity of \textbf{19.53} compared to 19.62 for FullSync. This experiment demonstrates the merits of our method and the feasibility of \asyncmesh{} optimization for large-scale language model training.

\subsection{Heterogeneous Setup}
So far, our experiments have been on a homogeneous setup where the compute capability of the devices is the same. To stress test our method, we simulate a heterogeneous setup for a $4\time 4$ mesh by varying device speeds (refer to appendix for details). To ensure each device initiates the averaging operation (\ie, all-reduce) at approximately the same time, we set the averaging interval proportional to the device speed, \ie, faster devices will perform more iterations between an averaging step. This means, model replicas at different training steps are sparsely averaged with a delay due to asynchronous \gls{DP}. This is analogous to dynamic local updates in~\cite{liu2024asynchronous}.  For fair comparison, we fix the total number of iterations across all replicas to be 120k.

As shown in \figref{fig:het}, even with up to $10\times$ difference in device speeds, and delayed sparse averaging of replicas at different training steps, the degradation of validation loss remains small. This degradation is negligible when accounting for the wall-clock speed-up, which scales with device speeds due to the absence of \gls{DP} communication overhead.

\begin{figure}
    \centering
    \begin{subfigure}{0.45\linewidth}
    \includegraphics[width=0.99\linewidth]{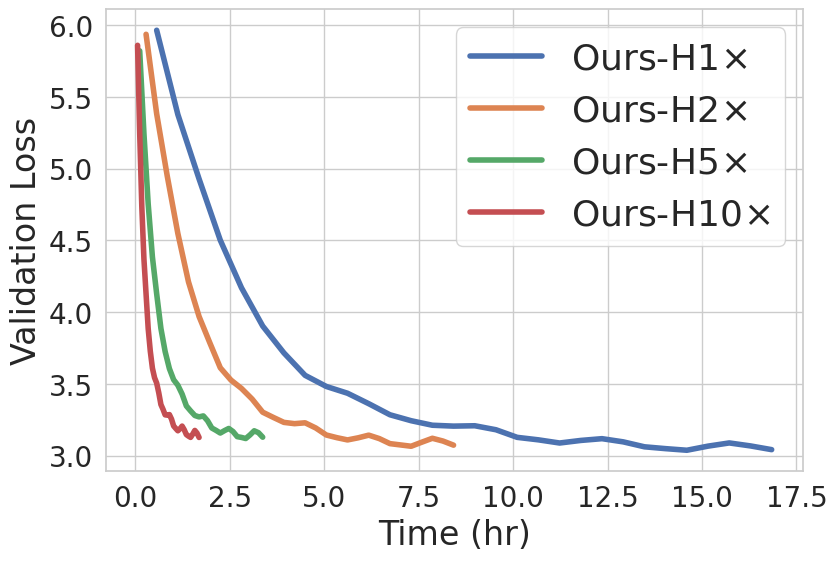}
    \end{subfigure}%
    \begin{subfigure}{0.45\linewidth}
    \includegraphics[width=0.99\linewidth]{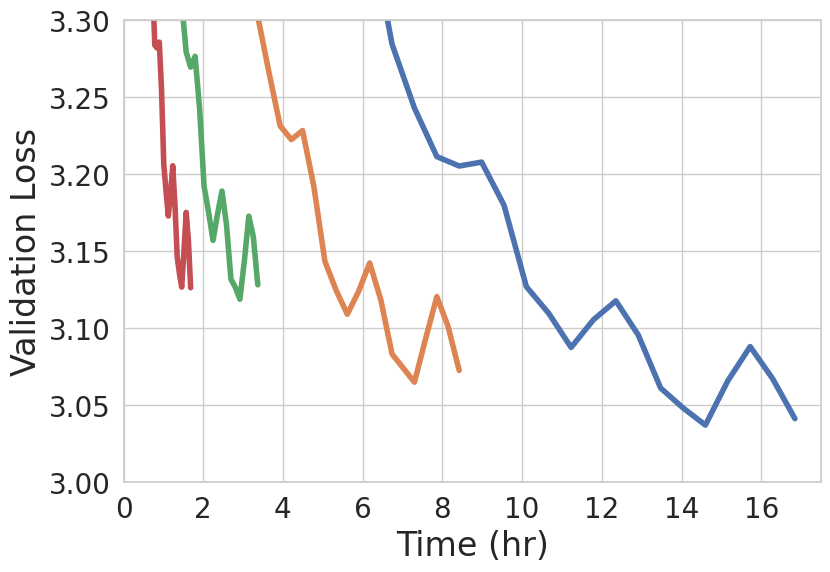}
    \end{subfigure}
    \vspace{-2ex}
    \caption{\em Our method in a heterogeneous setup with drastically varying device speeds. The performance degradation is negligible compared to the gain in wall-clock time.
    }
    \label{fig:het}
\end{figure}

\begin{figure}
    \centering
    \begin{subfigure}{0.5\linewidth}
    \includegraphics[height=0.64\linewidth]{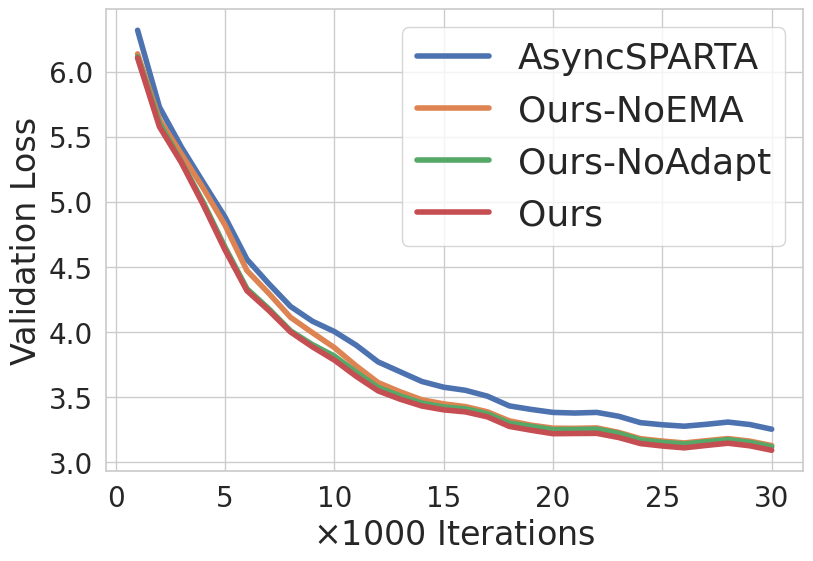}
    \end{subfigure}%
    \begin{subfigure}{0.25\linewidth}
    \includegraphics[height=1.28\linewidth]{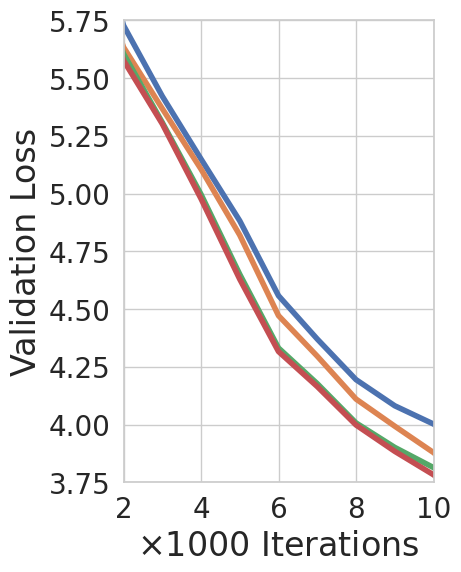}
    \end{subfigure}%
    \begin{subfigure}{0.25\linewidth}
    \includegraphics[height=1.28\linewidth]{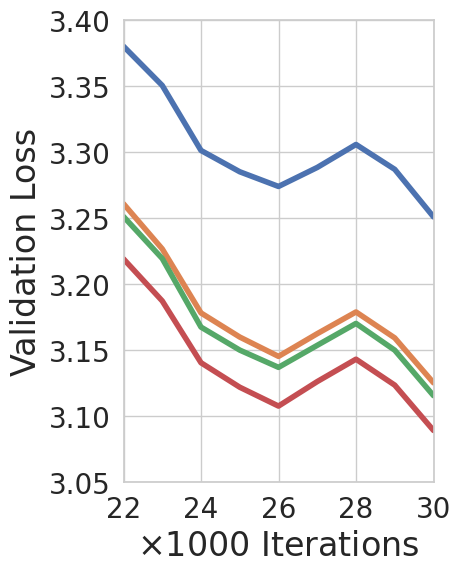}
    \end{subfigure}
    \vspace{-4ex}
    \caption{\em Ablation of our method on \wiki{} for the $8\times 2$ configuration. \gls{EMA} improves the early phase of training, and the adaptive momentum coefficient yields further marginal improvement. 
    }
    \vspace{-2ex}
    \label{fig:abl}
\end{figure}

\subsection{Ablation Study}
To understand the effect of different components of our method, we perform an ablation study in \figref{fig:abl} for the $8\times 2$ mesh on \wiki{}. Simply using weight differences within each replica, \ie, $\rvd_i^t = \hat{\rvw}_{i}^t - \hat{\rvw}_{i}^{t-\tau}$, substantially improve \asyncsparta{}, where \gls{EMA} and adaptive momentum coefficient as predicted by our theory further improves. Note that the effect of \gls{EMA} is more prominent in the early phase of training, where the step sizes are larger (\ie, noisier). This aligns with our intuition that \gls{EMA} robustly approximates the average staleness.


\section{Conclusion}
In this paper, we studied a new \asyncmesh{} setup where both \gls{DP} and \gls{PP} are asynchronous, and introduced a fully asynchronous approach that can match the performance of the fully synchronized method. We theoretically showed that our method converges to a fixed point of the consensus objective on expectation, despite sparse averaging and asynchronous communication. Our experiments on a wide range of configurations demonstrate the merits of our approach, and show the feasibility of asynchronous optimization for large scale language model training. By alleviating communication overhead without any performance penalty, our approach takes a step towards realizing large-scale collaborative training over the internet.

%% file: appendix.tex
\section{Theoretical Insights}
We restate the problem setup, introduce some notations, and then turn to the proofs.
\subsection{Setup and Notations}
We consider the \gls{DP} setup without \gls{PP} for simplified theoretical analysis. In this, the consensus objective takes the following form:
\begin{align}\label{eq:consensus1}
    \min_{\rvw\in\R^d} f(\rvw;\gD) \coloneqq &\min_{\rvw_i\in\R^d}\, \sum_{i=1}^m f(\rvw_i; \gD_i)\ ,\\\nonumber
    &\text{s.t.}\quad \rvw_i = \rvw\ ,\qquad\forall\,i\in \{1,\ldots,m\}\ .
\end{align}
Here, $i\in\{1,\ldots,m\}$ denotes the workers, $\rvw_i, \gD_i$ denote the model weights and data chunk for worker $i$, and $f$ is the loss function. Typically, $\gD_i$ is an \iid subset of $\gD$.

Let us first define some notations:
\begin{align}\label{eq:notations}
\bar{\rvw}^t &\coloneqq \frac{1}{m}\sum_{i=1}^m \rvw_i^t\ ,\qquad&\mbox{averaged weights}\ ,\\\nonumber
\rvg_i^t &\coloneqq \nabla f_i(\rvw_i^t; \gB_i^t)\ ,\qquad&\mbox{minbatch gradient}\ ,\\\nonumber
\bar{\rvg}^t &\coloneqq \frac{1}{m}\sum_{i=1}^m \rvg_i^t\ ,\qquad&\mbox{average of gradients}\ ,\\\nonumber
\norm{\bfdelta^t}^2 &\coloneqq \sum_{i=1}^m \norm{\rvw_i^t - \bar{\rvw}^t}^2\ ,\qquad&\mbox{consensus error}\ .\\\nonumber
\end{align}
Here, $\norm{\bfdelta^t}^2$ denotes the post-averaging consensus error at iteration $t\!-\!1$, and the corresponding pre-averaging error is denoted as $\norm{\hat{\bfdelta}^t}^2$. Analogously, $\bfdelta^t_i \coloneqq \rvw_i^t - \bar{\rvw}^t$.

We are now ready to prove that sparse averaging ensures consensus across the weights of all workers on expectation.

\subsection{Sparse Averaging Ensures Consensus}

\begin{lem}\label{lem:sparta1}
    Let $p$ be the probability of an element $\mu\in\gS^t$ independent of others, then sparse averaging shrinks the consensus error by a factor of $(1\!-\!p)$ on expectation, \ie, ${\E\left[\norm{\bfdelta^{t}}^2\right] = (1-p)\,\E\left[\norm{\hat{\bfdelta}^{t}}^2\right]}$.
\end{lem}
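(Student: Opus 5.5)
The plan is to reduce everything to a per-coordinate computation, because both the consensus error and the sparse averaging rule in \eqref{eq:sparta} act elementwise. For each element $\mu\in\{1,\ldots,d\}$, write the pre-averaging weights $\hat{w}_{i:\mu}^t$ and their mean $\bar{\hat{w}}_\mu^t=\frac{1}{m}\sum_i \hat{w}_{i:\mu}^t$. The pre-averaging consensus error then decomposes as
\[
\norm{\hat{\bfdelta}^t}^2=\sum_{\mu=1}^d e_\mu\ ,\qquad e_\mu\coloneqq\sum_{i=1}^m\bigl(\hat{w}_{i:\mu}^t-\bar{\hat{w}}_\mu^t\bigr)^2 .
\]
The first step is to note that the average over workers is preserved by sparse averaging. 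If $\mu\in\gS^t$, every worker is set to $\bar{\hat{w}}_\mu^t$. If $\mu\notin\gS^t$, nothing changes. In both cases $\bar{w}^t_\mu=\bar{\hat{w}}^t_\mu$, so the post-averaging error for coordinate $\mu$ is measured against the same center.

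It follows that the post-averaging contribution of coordinate $\mu$ is $0$ when $\mu\in\gS^t$ and $e_\mu$ when $\mu\notin\gS^t$. This gives the exact identity
\[
\norm{\bfdelta^t}^2=\sum_\mu \id{\mu\notin\gS^t}\, e_\mu .
\]

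Next I take expectations. The key modelling point, which is where I expect the only subtlety, is that the random subset $\gS^t$ is drawn independently of the history up to the local update. In particular it is independent of $\hat{\rvw}^t_i$, which depends only on past subsets and minibatches. I therefore condition on the $\sigma$-algebra generated by everything up to and including the computation of $\hat{\rvw}_i^t$. Under this conditioning $e_\mu$ is fixed, and $\E[\id{\mu\notin\gS^t}]=1-p$ because each element is included with probability $p$. (Independence across elements is not even needed for this linear identity, only the marginal probability.) This yields
\[
\E\bigl[\norm{\bfdelta^t}^2\mid\hat{\rvw}^t\bigr]=(1-p)\norm{\hat{\bfdelta}^t}^2 .
\]
Applying the tower property gives the claimed equality in unconditional expectation.

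The main obstacle is therefore not computational but in stating precisely the independence of $\gS^t$ from the pre-averaging iterate. I would make this explicit as part of the sampling assumption, noting that the stochastic gradients at step $t-1$ are computed before $\gS^t$ is sampled. The remaining steps are direct algebra.
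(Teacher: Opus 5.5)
Your proposal is correct and follows essentially the same route as the paper: you decompose coordinatewise, note that averaged coordinates contribute zero while the others are unchanged, and use $\E\,\id{\mu\notin\gS^t}=1-p$. Your version is slightly more careful, because it makes explicit two points the paper uses silently: sparse averaging preserves the replica mean, so pre- and post-averaging errors are measured about the same center; and $\gS^t$ is independent of $\hat{\rvw}^t$, which is what justifies pulling $(\hat{\bfdelta}^t_{i:\mu})^2$ out of the expectation via conditioning and the tower property.
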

\begin{proof}
    By definition of sparse averaging, $w_{i:\mu}^{t+1} = \frac{1}{m}\sum_i\hat{w}_{i:\mu}^{t}$ for all $\mu\in\gS^t$. Therefore, 
    \begin{align}
        \E\left[\norm{\bfdelta^{t}}^2\right] &= \sum_{i=1}^m\sum_{\mu=1}^d \E\left[\left(\bfdelta_{i:\mu}^{t}\right)^2\right]\ ,\qquad&\mbox{linearity of expectation}\ ,\\\nonumber
        &= \sum_{i=1}^m\sum_{\mu=1}^d \left(\hat{\bfdelta}_{i:\mu}^{t}\right)^2\,\E\,\id{\mu\notin \gS^t}\ ,\qquad&\mbox{$\id{\cdot}$ is the indicator}\ ,\\\nonumber
        &= (1-p)\,\E\left[\norm{\hat{\bfdelta}^{t}}^2\right]\ .\qquad&\mbox{$P\id{\mu\notin \gS^t} = 1-p$}\ .
    \end{align}
\end{proof}

\begin{thm}\label{thm:sparta1}
    Let $f$ be a $L$-smooth function, the stochastic gradient $\rvg_i^t$ be an unbiased estimate of $\nabla f$ and have bounded variance $\sigma^2$, and $p>0$ be the averaging probability for an element $\mu$, then, for an appropriate choice of learning rate $\eta_t > 0$ the consensus error diminishes on expectation, \ie, $\lim_{t\to \infty}\E\left[\norm{\bfdelta^{t}}^2 \right] = 0$.
\end{thm}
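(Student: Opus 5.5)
We would prove Theorem~\ref{thm:sparta1} by deriving a one-step recursion for $\E\left[\norm{\bfdelta^{t}}^2\right]$ and showing that it becomes a contraction plus a vanishing perturbation once $\eta_t$ is tied to $p$. We would use Lemma~\ref{lem:sparta1} for the averaging step and smoothness together with bounded variance for the local step.

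\textbf{Step 1: the local step.} Before averaging, $\hat{\rvw}_i^{t+1} = \rvw_i^t - \eta_t \rvg_i^t$. Its average is $\bar{\rvw}^t - \eta_t\bar{\rvg}^t$, so
\begin{equation*}
\hat{\bfdelta}_i^{t+1} = \bfdelta_i^t - \eta_t(\rvg_i^t - \bar{\rvg}^t).
\end{equation*}
Apply Young's inequality $\norm{a+b}^2 \le (1+\beta)\norm{a}^2 + (1+1/\beta)\norm{b}^2$ with a free $\beta>0$. This gives
\begin{equation*}
\norm{\hat{\bfdelta}^{t+1}}^2 \le (1+\beta)\norm{\bfdelta^t}^2 + (1+1/\beta)\,\eta_t^2\sum_i\norm{\rvg_i^t-\bar{\rvg}^t}^2 .
\end{equation*}
To control the gradient term, split $\rvg_i^t - \bar{\rvg}^t$ into its noise part $\rvg_i^t - \nabla f(\rvw_i^t)$ (minus its average) and its deterministic part $\nabla f(\rvw_i^t) - \frac{1}{m}\sum_k \nabla f(\rvw_k^t)$.
\begin{itemize}
\item The noise part contributes at most $m\sigma^2$ in expectation, using unbiasedness and bounded variance.
\item For the deterministic part, $\sum_i\norm{\rvx_i-\bar{\rvx}}^2 \le \sum_i\norm{\rvx_i - \rvy}^2$ for any $\rvy$. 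Taking $\rvy=\nabla f(\bar{\rvw}^t)$ and using $L$-smoothness bounds it by $L^2\norm{\bfdelta^t}^2$.
\end{itemize}
Hence
\begin{equation*}
\E\norm{\hat{\bfdelta}^{t+1}}^2 \le \left[(1+\beta) + 2(1+1/\beta)\eta_t^2L^2\right]\E\norm{\bfdelta^t}^2 + 2(1+1/\beta)\,\eta_t^2 m\sigma^2 .
\end{equation*}
In the homogeneous (\iid) setup all $f_i$ share the same $\nabla f$, so no heterogeneity term appears.

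\textbf{Step 2: the averaging step.} The subset $\gS^t$ is sampled independently of the past, so Lemma~\ref{lem:sparta1} applies conditionally and gives $\E\norm{\bfdelta^{t+1}}^2 = (1-p)\,\E\norm{\hat{\bfdelta}^{t+1}}^2$.

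\textbf{Step 3: choosing $\beta$ and $\eta_t$.} Take $\beta = p/2$. Then $(1-p)(1+p/2) \le 1 - p/2$. Next choose $\eta_t \le c\,p/L$ with a small constant $c$, so that $(1-p)\,2(1+2/p)\,\eta_t^2L^2 \le p/4$. This is where the learning rate proportional to $p$ enters: the $1/\beta \sim 1/p$ blow-up must be absorbed by $\eta_t^2 L^2 \sim p^2$. We then obtain
\begin{equation*}
e_{t+1} \le (1-p/4)\,e_t + C\,\frac{\eta_t^2}{p}\,m\sigma^2, \qquad e_t \coloneqq \E\norm{\bfdelta^t}^2 .
\end{equation*}

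\textbf{Step 4: concluding.} Unrolling the recursion gives $e_t \le (1-p/4)^t e_0 + C'\sum_s (1-p/4)^{t-s}\eta_s^2\sigma^2/p$. With a diminishing step size $\eta_t\to 0$ (for instance a standard Robbins--Monro schedule capped at $cp/L$), the convolution of the geometric kernel with a null sequence tends to zero, by the standard lemma that $a_{t+1}\le(1-q)a_t + b_t$ with $b_t\to0$ implies $a_t\to 0$. Hence $e_t\to0$. With identical initialization $e_0=0$, although this is not needed.

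\textbf{Main obstacle.} The main delicacy is Step 1–3, namely keeping the contraction factor strictly below $1$. Sparse averaging only removes a fraction $p$ of the error, while the local step inflates it multiplicatively. The careful choice $\beta=\Theta(p)$ with $\eta_t=O(p/L)$ is what makes this work. We also need a constant step size to be excluded, since it would leave a residual $O(\eta^2\sigma^2/p^2)$ rather than zero.
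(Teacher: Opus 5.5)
Your proof is correct and follows the paper's skeleton: the identity $\hat{\bfdelta}_i^{t+1} = \bfdelta_i^t - \eta_t(\rvg_i^t - \bar{\rvg}^t)$, Lemma~\ref{lem:sparta1} for the factor $(1-p)$, a cap $\eta_t = O(p/L)$ to force a contraction, and $\eta_t \to 0$ to remove the additive noise term. You differ from the paper only in how you control the local step.

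The paper expands the square exactly. It uses $\sum_i \bfdelta_i^t = 0$ and unbiasedness to reduce the cross term to $\sum_i \bfdelta_i^t\cdot(\nabla f(\rvw_i^t) - \nabla f(\bar{\rvw}^t))$, and bounds this through smoothness by $L\,\E\norm{\bfdelta^t}^2$. That gives an inflation factor $1 + 2\eta_t L$, first order in $\eta_t$. The noise enters only through $\eta_t^2 m\sigma^2$, because its cross term has zero mean.

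You instead apply Young's inequality with a fixed $\beta = p/2$ and absorb the $(1+2/p)$ blow-up by taking $\eta_t^2 L^2 = O(p^2)$. This buys you three things:
\begin{itemize}
\item You need no conditional-expectation or sign argument for the cross term. The paper states an upper bound on $\E[\sum_i \bfdelta_i^t\cdot\rvg_i^t]$, although the minus sign in front of that term calls for a lower bound. Cauchy--Schwarz supplies one, but you never face the issue.
\item You keep the $\eta_t^2 L^2$ contribution explicitly in the contraction coefficient, rather than folding it into an $O(\eta_t^2)$ remainder as the paper does.
\item You get an explicit uniform factor $1 - p/4$. This makes clear that only $\eta_t \to 0$ together with $\eta_t \le cp/L$ is needed, not the full Robbins--Monro conditions the paper invokes.
\end{itemize}

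The price is that Young's inequality discards the zero-mean cancellation of the gradient noise, so your noise term is inflated by a factor $1/p$. With a constant step your residual is $O(\eta^2 m\sigma^2/p^2)$, whereas the exact expansion gives $O(\eta^2 m\sigma^2/p)$. This loss does not matter for the limit statement of the theorem.
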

\begin{proof}
    Let us expand $\hat{\bfdelta}^{t+1}_i$:
    \begin{align}
        \hat{\bfdelta}^{t+1}_i &= \hat{\rvw}_i^{t+1} - \frac{1}{m}\sum_i\hat{\rvw}_i^{t+1}\ ,\\\nonumber
        &= \rvw_i^t - \eta_t\,\rvg_i^t - \frac{1}{m}\sum_i\left(\rvw_i^t - \eta_t\,\rvg_i^t\right)\ ,\qquad&\mbox{local update}\ ,\\\nonumber
        &= \bfdelta_i^{t} - \eta_t\left(\rvg_i^t - \bar{\rvg}^t\right)\ .
    \end{align}
    Now, the pre-averaging consensus error can be written as:
    \begin{align}
        \norm{\hat{\bfdelta}^{t+1}}^2 &= \sum_i \norm{\bfdelta_i^{t} - \eta_t\left(\rvg_i^t - \bar{\rvg}^t\right)}^2\ ,\\\nonumber
        &= \sum_i \left(\norm{\bfdelta_i^{t}}^2 - 2\,\eta_t\,\bfdelta_i^t \cdot \left(\rvg_i^t - \bar{\rvg}^t\right) + \eta_t^2 \norm{\rvg_i^t - \bar{\rvg}^t}^2\right)\ ,\qquad&\mbox{$\cdot$ is the dot product}\ ,\\\nonumber
        &= \sum_i \left(\norm{\bfdelta_i^{t}}^2 - 2\,\eta_t\,\bfdelta_i^t \cdot \rvg_i^t + \eta_t^2 \norm{\rvg_i^t - \bar{\rvg}^t}^2\right)\ ,\qquad&\mbox{$\sum_i \bfdelta_i^t = 0$}\ .
    \end{align}
    We now bound each of the terms following techniques similar to that of~\cite{yu2019linear-proof}.
    Consider $ \norm{\rvg_i^t - \bar{\rvg}^t}^2$:
    \begin{align}
         \norm{\rvg_i^t - \bar{\rvg}^t}^2 &=  \norm{\rvg_i^t - \nabla f(\rvw_i^t) + \nabla f(\rvw_i^t) - \bar{\nabla} f(\rvw_i^t) + \bar{\nabla} f(\rvw_i^t) - \bar{\rvg}^t}^2\ ,\\\nonumber
         &\le 3\norm{\rvg_i^t - \nabla f(\rvw_i^t)}^2 + 3\norm{\nabla f(\rvw_i^t) - \bar{\nabla} f(\rvw_i^t)}^2 + 3\norm{\bar{\nabla} f(\rvw_i^t) - \bar{\rvg}^t}^2\ ,
    \end{align}
    where $\bar{\nabla} f(\rvw_i^t) = \frac{1}{m}\sum_i \nabla f(\rvw_i^t)$, and the second step is due to $(a+b+c)^2 \le 3(a^2+b^2+c^2)$. Each of these terms can be bounded as follows: 1) using bounded variance,
    \begin{align}
        \sum_i \norm{\rvg_i^t - \nabla f(\rvw_i^t)}^2 &\le m\,\sigma^2 \ .
    \end{align}
    2) using $L$-smoothness and triangle inequality,
    \begin{align}
        \sum_i \norm{\nabla f(\rvw_i^t) - \bar{\nabla} f(\rvw_i^t)}^2 &= \sum_i \norm{\nabla f(\rvw_i^t) - \nabla f(\bar{\rvw}^t) + \nabla f(\bar{\rvw}^t) - \bar{\nabla} f(\rvw_i^t)}^2\ ,\\\nonumber
        &= \sum_i 2\norm{\nabla f(\rvw_i^t) - \nabla f(\bar{\rvw}^t)}^2 + 2\norm{\frac{1}{m}\sum_i \nabla f(\bar{\rvw}^t) - \frac{1}{m}\sum_i\nabla f(\rvw_i^t)}^2\ ,\\\nonumber
        &\le 2 L^2 \sum_i\norm{\rvw_i^t - \bar{\rvw}^t}^2 + \frac{2L^2}{m} \sum_k \sum_i\norm{\rvw_i^t - \bar{\rvw}^t}^2\ ,\\\nonumber
        &= 4L^2 \norm{\bfdelta^{t}}^2\ .
    \end{align}
    3) using triangle inequality and bounded variance, 
    \begin{align}
        \sum_i \norm{\bar{\nabla} f(\rvw_i^t) - \bar{\rvg}^t}^2 &\le \frac{1}{m}\sum_i\norm{\rvg_i^t - \nabla f(\rvw_i^t)}^2 \le \sigma^2\ .
    \end{align}
    Altogether for some constant $C > 0$, we can write:
    \begin{equation}
        \E\left[\sum_i \norm{\rvg_i^t - \bar{\rvg}^t}^2\right] \le C\left(m\,\sigma^2 + L^2 \,\E\left[\norm{\bfdelta^{t}}^2\right]\right)\ .
    \end{equation}
    Now, consider the term $\E\left[\sum_i\bfdelta_i^t \cdot \rvg_i^t\right]$:
    \begin{align}
        \E\left[\sum_i\bfdelta_i^t \cdot \rvg_i^t\right] &= \E\left[\sum_i\bfdelta_i^t \cdot \nabla f(\rvw_i^t)\right]\ ,\qquad&\mbox{$\E\left[\rvg_i^t\mid \bfdelta_i^t\right] = \E\left[\rvg_i^t\mid \rvw_i^t\right] = \nabla f(\rvw_i^t)$}\ ,\\\nonumber
        &= \E\left[\sum_i\bfdelta_i^t \cdot \left(\nabla f(\rvw_i^t)-\nabla f(\bar{\rvw}^t)\right)\right]\ ,\qquad&\mbox{$\sum_i \bfdelta_i^t = 0$}\ ,\\\nonumber
        &\le L\, \E\left[\sum_i\norm{\bfdelta_i^t} \cdot \norm{\bfdelta_i^t}\right]\ ,\qquad&\mbox{$L$-smooth, $\bfdelta^t_i = \rvw_i^t - \bar{\rvw}^t$}\ ,\\\nonumber
        &= L\, \E\left[\norm{\bfdelta^t}^2\right]\ .
    \end{align}
    Putting everything together,
    \begin{align}
         \norm{\hat{\bfdelta}^{t+1}}^2 &\le (1 + 2\,\eta_t\,L)\,\E\left[\norm{\bfdelta^t}^2\right] + \eta_t^2\,C\left(m\,\sigma^2 + L^2 \,\E\left[\norm{\bfdelta^{t}}^2\right]\right)\ .
    \end{align}
    From Lemma~\ref{lem:sparta1}, 
    \begin{align}
        \E\left[\norm{\bfdelta^{t+1}}^2\right] &= (1-p)\,\E\left[\norm{\hat{\bfdelta}^{t+1}}^2\right]\ ,\\\nonumber
        &\le (1-p)\,(1 + 2\,\eta_t\,L)\,\E\left[\norm{\bfdelta^t}^2\right] + \gO(\eta_t^2)\ .
    \end{align}
    Note, $\eta_t$ can be chosen such that the quadratic term is vanishes, \ie, ${\eta_t > 0, \sum_t\eta_t=\infty, \text{and} \sum_t\eta_t^2 < \infty}$~\cite{robbins1951stochastic}, and the coefficient ${(1-p)\,(1 + 2\,\eta_t\,L)}$ is strictly less than 1, \ie, $\eta_t < \frac{p}{2(1-p)L}$. 
    
    This yields a contraction and ensures ${\lim_{t\to \infty}\E\left[\norm{\bfdelta^{t}}^2 \right] = 0}$.
\end{proof}

This proves that sparse averaging can lead to consensus among workers, in the sense that, on expectation, all weight vectors converge to their average. This, together with the standard convergence proof of \gls{SGD}~\cite{bottou2018optimization} guarantees that sparse averaging converges to a fixed point of \eqref{eq:consensus1}.

\subsection{EMA based Delay Correction Ensures Consensus}
We consider a {\em homogeneous setup} where all workers are initialized to the same point, the data chunks are \iid, and the optimizer parameters are identical. In this, we first show that the expected staleness $\E\left[\hat{\rvw}_i^t - \hat{\rvw}_i^{t-\tau}\right]$ can be independently estimated in the worker $i$. Then, we prove that the drift of $\rvd_i^t$ between different workers diminishes, ensuring consensus. In this section, with a slight abuse of notation, we define $\bar{\rvw}^t = \frac{1}{m}\sum_i\hat{\rvw}_{i}^{t}$.

\begin{lem}\label{lem:ema1}
    In a homogeneous setup as defined above, the expected value of the weight drift is independent of the worker, \ie, $\E\left[\hat{\rvw}_i^t - \hat{\rvw}_i^{t-\tau}\right] = \rmD^t$.
\end{lem}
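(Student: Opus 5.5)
The claim is a symmetry (exchangeability) statement, so I will argue that the joint law of the whole system is invariant under permutations of the worker indices. Concretely, I will model the full process as a deterministic update map driven by random inputs. The inputs are the minibatches $\gB_i^s$ drawn from the \iid chunks $\gD_i$, the random subsets $\gS^s$ shared by all workers, and any optimizer randomness. The update map itself consists of the local optimizer step, the (delayed) sparse averaging, and the \gls{EMA} correction in \eqref{eq:ema}. Under the homogeneous assumptions, every worker starts from the same initialization, uses identical hyperparameters, and draws its minibatches from the same distribution. The sampling is independent across workers and independent of $\gS^s$.

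The key step is to show that, for any permutation $\pi$ of $\{1,\ldots,m\}$, the update map is equivariant. Relabelling the workers' inputs by $\pi$ must relabel all workers' states (weights $\hat{\rvw}_i^s$, $\rvw_i^s$, \gls{EMA} states $\rvd_i^s$, optimizer states) by $\pi$. This holds because the local update of worker $i$ depends only on its own state and minibatch. The coupling between workers enters only through symmetric functions of the states, namely the average $\bar{\rvw}^{s}=\frac{1}{m}\sum_i\hat{\rvw}_i^{s}$ and the shared subset. I will verify this by induction on $s$. At $s=0$ the states are identical, hence trivially exchangeable. If the states at time $s$ form an exchangeable family jointly with the past, then the fresh inputs at time $s$ are \iid across workers and independent of the past. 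Applying an equivariant map to an exchangeable family with exchangeable inputs yields an exchangeable family. Hence the joint law of the trajectories $(\hat{\rvw}_i^{0},\ldots,\hat{\rvw}_i^{t})_{i=1}^m$ is invariant under $\pi$.

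From exchangeability, the marginal law of the trajectory of worker $i$ is the same as that of worker $\pi(i)$ for every $i$. In particular, $\hat{\rvw}_i^t-\hat{\rvw}_i^{t-\tau}$ has the same distribution for all $i$, so its expectation, assumed finite (e.g.\ by the boundedness assumed in \thmref{thm:ema}), is a common vector. I call this vector $\rmD^t$. As a by-product, averaging over $i$ gives $\rmD^t=\E[\bar{\rvw}^t-\bar{\rvw}^{t-\tau}]$. This is what is needed later to connect the per-worker \gls{EMA} to the average staleness.

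I expect the main obstacle to be making the equivariance rigorous given the delayed, asynchronous structure. The update at time $t$ depends on states at $t-\tau$ and on shared randomness $\gS^{t-\tau}$, so the induction must carry the entire history window, not just the current state. I also need the delay $\tau$ and the communication timing to be worker-independent, which the homogeneous setup provides. I would handle this by taking the induction hypothesis to be exchangeability of the full history up to time $s$, with the shared randomness treated as a common conditioning variable. Under that hypothesis the argument goes through conditionally on $\{\gS^s\}_s$, and then unconditionally.
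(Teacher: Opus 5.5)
Your proof is correct, and it reaches the lemma by a different (more rigorous) route than the paper's. The paper starts from the one-step identity $\E[\hat{\rvw}_i^{k+1}-\rvw_i^k] = -\eta_k\nabla f(\rvw_i^k)$, which uses unbiasedness of the stochastic gradient. It notes that this is identical across workers when $\rvw_i^k=\rvw^k$, and extends it to $\tau$ steps by appealing to homogeneity (``a particular weight trajectory is equally probable for all workers''). Finally it splits the drift as $\E[\hat{\rvw}_i^t-\rvw_i^{t-\tau}]+\E[\rvw_i^{t-\tau}-\hat{\rvw}_i^{t-\tau}]$ into a local-update part and an averaging part.

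The one-step identity does not carry the argument past the first step, because the workers' weights differ once they see different minibatches. What actually does the work is the ``equally probable trajectory'' claim, and that is exactly the exchangeability you prove via permutation equivariance and induction. Your argument therefore needs neither unbiasedness, nor a linear SGD update, nor the local/averaging split. It also covers the stateful optimizer, the delayed averaging, and the EMA state directly. In return, the paper's computation gives an explicit picture of $\rmD^t$ as accumulated expected gradient steps, which is what motivates the drift assumption on $\bfalpha_t$ in Theorem~\ref{thm:ema1}.

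You can simplify by dropping the induction, and with it the requirement that fresh minibatches be independent of the past. The whole trajectory is a deterministic, permutation-equivariant function of the input array $\bigl((\gD_i,(\gB_i^s)_s)\bigr)_{i=1}^m$ together with the shared subsets. Exchangeability of that array then transfers directly to the trajectories. That array is exchangeable whenever the chunks are assigned exchangeably, even if a worker's successive minibatches are correlated through its own chunk.
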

\begin{proof}
    Considering a single local update:
    \begin{align}
        \E\left[\hat{\rvw}_i^{k+1} - \rvw_i^k\right] &= -\E\left[\eta_k\,\rvg_i^k\right] = -\eta_k \nabla f(\rvw_i^k)\ ,\qquad&\mbox{$\rvg_i^k$ is unbiased}\ .
    \end{align}
    This is independent and identical for each worker if $\rvw_i^k = \rvw^k$ for all $i$. This argument can be extended to multiple steps for a homogeneous setup due to the same initialization, \iid data samples, and identical optimizer hyperparameters. Intuitively, one can see that a particular weight trajectory is equally probable for all workers in a homogeneous setup. 
    
    Additionally, the \gls{EMA} update parameters ($\rvd_i^0$ and $\lambda_t$) are also identical across workers.    
    Therefore, the average weight drift $\E\left[\hat{\rvw}_i^t - \hat{\rvw}_i^{t-\tau}\right] = \E\left[\hat{\rvw}_i^t - \rvw_i^{t-\tau}\right] + \E\left[\rvw_i^{t-\tau}-\hat{\rvw}_i^{t-\tau}\right]$ is independent of the worker $i$.
\end{proof}

\begin{thm}\label{thm:ema1}
    Consider a homogeneous setup where the average staleness $\rmD^t$ is bounded and its drift $\bfalpha_t \coloneqq \rmD^{t} - \rmD^{t-1}$ is diminishing, \ie, $\lim_{t\to\infty}\norm{\bfalpha_t} = 0$. Then, the \gls{EMA} based delay correction with $\lambda_t$ satisfying 
    $\sum_t \lambda_t = \infty$, $\sum_t \lambda_t^2 < \infty$, and $\sum_t \frac{\norm{\bfalpha_t}^2}{\lambda_t} < \infty$ ensures consensus, \ie, $\lim_{t\to \infty}\E\left[\norm{\bfdelta^{t}}^2 \right] = 0$.
\end{thm}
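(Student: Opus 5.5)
The plan is to reduce consensus to a statement about each worker's \gls{EMA} tracking the common target $\rmD^t$ given by Lemma~\ref{lem:ema1}. Fix a coordinate set $\gS^{t-\tau}$. On the averaged coordinates, the post-averaging weights are $\tilde{\rvw}_i^t = \bar{\rvw}^{t-\tau} + \rvd_i^t$. Since $\bar{\rvw}^{t-\tau}$ is common to all workers, the deviation from the mean on these coordinates equals $\rvd_i^t - \bar{\rvd}^t$, where $\bar{\rvd}^t = \frac1m\sum_i \rvd_i^t$. This gives
\begin{equation*}
\sum_i \norm{\rvd_i^t - \bar{\rvd}^t}^2 \le \sum_i \norm{\rvd_i^t - \rmD^t}^2 .
\end{equation*}
So it suffices to show $e_i^t := \E\norm{\rvd_i^t - \rmD^t}^2 \to 0$ for every $i$. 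The coordinates outside the subset are then handled as in Theorem~\ref{thm:sparta1}: averaging happens with probability $p$ per coordinate, and we combine the contraction of Lemma~\ref{lem:sparta1} with the vanishing error just described.

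For the tracking step, write $\rvx_i^t := \hat{\rvw}_i^t - \hat{\rvw}_i^{t-\tau}$, so that $\E\rvx_i^t = \rmD^t$ by Lemma~\ref{lem:ema1}. Let $\bfxi_i^t := \rvx_i^t - \rmD^t$ denote the zero-mean noise. Its second moment is bounded, because $\rmD^t$ is bounded and we assume bounded variance as in Theorem~\ref{thm:sparta1}, i.e. $\E\norm{\bfxi_i^t}^2\le\sigma_x^2$. The \gls{EMA} recursion gives
\begin{equation*}
\rvd_i^t - \rmD^t = (1-\lambda_t)\left(\rvd_i^{t-1} - \rmD^{t-1}\right) - (1-\lambda_t)\bfalpha_t + \lambda_t \bfxi_i^t .
\end{equation*}
I would square this and take expectations, treating $\bfxi_i^t$ as uncorrelated with the past error $\rvd_i^{t-1}-\rmD^{t-1}$ (a martingale-difference assumption in the spirit of \cite{robbins1951stochastic}). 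To handle the cross term with $\bfalpha_t$, I would use Young's inequality $2ab\le \lambda_t a^2 + b^2/\lambda_t$. Together these should yield
\begin{equation*}
e_i^t \le (1-\lambda_t)\, e_i^{t-1} + \frac{\norm{\bfalpha_t}^2}{\lambda_t} + \lambda_t^2\sigma_x^2 ,
\end{equation*}
up to constants, where $(1-\lambda_t)^2(1+\lambda_t)\le 1-\lambda_t$ is used to absorb the Young term.

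The recursion is then closed by the standard Robbins--Siegmund / Chung lemma: if $e^t\le(1-\lambda_t)e^{t-1}+b_t$ with $\sum_t\lambda_t=\infty$ and $\sum_t b_t<\infty$, then $e^t\to0$. Here $b_t=\norm{\bfalpha_t}^2/\lambda_t+\lambda_t^2\sigma_x^2$, which is summable exactly by the hypotheses $\sum_t \frac{\norm{\bfalpha_t}^2}{\lambda_t}<\infty$ and $\sum_t\lambda_t^2<\infty$. Hence each $e_i^t\to0$, the post-averaging discrepancy on averaged coordinates vanishes, and feeding this into the recursion of Theorem~\ref{thm:sparta1} gives $\lim_{t\to\infty}\E\norm{\bfdelta^t}^2=0$.

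The main obstacle is the noise step. The increments $\rvx_i^t$ overlap across $\tau$ consecutive steps, so $\bfxi_i^t$ is not independent of $\rvd_i^{t-1}$, and the claimed conditional-mean-zero property is only heuristic. I would first try to bound the cross term $\E[(\rvd_i^{t-1}-\rmD^{t-1})\cdot\bfxi_i^t]$ using the fact that correlations persist only for $\tau$ steps. That cost is a $\tau\lambda_t^2$-type term, which is still summable. If that fails, I would simply assume the martingale-difference property as part of the ``standard stochastic approximation assumptions'' invoked in the statement.
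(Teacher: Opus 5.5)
Your proposal is essentially the paper's proof. The following steps all appear there:
\begin{itemize}
\item reducing the deviation on averaged coordinates to $\rvd_i^t-\bar{\rvd}^t$;
\item the error recursion with drift $\bfalpha_t$ and noise $\bfxi_i^t$;
\item Young's inequality with $\epsilon=\lambda_t$ and the absorption $(1-\lambda_t)^2(1+\lambda_t)\le 1-\lambda_t$;
\item the Robbins--Siegmund-type conclusion.
\end{itemize}
Your inequality $\sum_i\norm{\rvd_i^t-\bar{\rvd}^t}^2\le\sum_i\norm{\rvd_i^t-\rmD^t}^2$ is a rigorous substitute for the paper's heuristic replacement of $\bar{\rvd}^t$ by $\rmD^t$.

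The vanishing of $\E\left[(\rvd_i^{t-1}-\rmD^{t-1})\cdot\bfxi_i^t\right]$, which you flag as heuristic, is simply asserted in the paper ``due to Lemma~\ref{lem:ema1}'', so both arguments rest on the same implicit martingale-difference assumption. Your treatment of the non-averaged coordinates goes slightly beyond the paper, whose proof treats every coordinate as averaged and only remarks afterwards that the argument is elementwise.
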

\begin{proof}
    Now consider $\bfdelta^{t}_i$:
    \begin{align}
        \bfdelta^{t}_i &= \rvw_i^{t} - \bar{\rvw}_i^{t}\ ,\\\nonumber
        &= \bar{\rvw}^{t-\tau} + \rvd_i^t - \frac{1}{m}\sum_i\left(\bar{\rvw}^{t-\tau} + \rvd_i^t\right)\ ,\qquad&\mbox{\gls{EMA}}\ ,\\\nonumber
        &= \rvd_i^t - \bar{\rvd}_i^t\ .
    \end{align}
    Note that, $\bar{\rvd}_i^t = \frac{1}{m} \sum_i \rvd_i^t$ is the empirical estimate of $\E\left[\rvd_i^t\right]$. Following the arguments from Lemma~\ref{lem:ema1}, we can see that $\E\left[\rvd_i^t\right] = \rmD^t$ for all workers. If $\rmD^t$ is a constant, then the standard stochastic approximation theory~\cite{robbins1951stochastic} yields the desired result. However, since $\rmD^t$ varies with time, we need an additional assumption on the interplay between the drift and the \gls{EMA} coefficient, that $\frac{\norm{\bfalpha_t}^2}{\lambda_t}\to 0$ and carefully bound the consensus error.
    
    Substituting $\bar{\rvd}_i^t=\rmD^t$, together with the \gls{EMA} update:
    \begin{align}
        \bfdelta^{t}_i &= \rvd_i^t - \rmD^t\ ,\\\nonumber
        &= (1-\lambda_t)\,\rvd_i^{t-1} + \lambda_t \left(\hat{\rvw}_i^t - \hat{\rvw}_i^{t-\tau}\right) - \rmD^t \ ,\\\nonumber
        &= (1-\lambda_t)\left(\rvd_i^{t-1} - \rmD^{t-1} + \rmD^{t-1} - \rmD^{t}\right) + \lambda_t \left(\hat{\rvw}_i^t - \hat{\rvw}_i^{t-\tau} - \rmD^t\right)\ ,\\\nonumber
        &= (1-\lambda_t)\,\bfdelta_i^{t-1} - (1-\lambda_t)\,\bfalpha_t + \lambda_t\, \bfxi_i^t\ ,\qquad\qquad\mbox{$\bfxi_i^t = \hat{\rvw}_i^t - \hat{\rvw}_i^{t-\tau} - \rmD^t$}\ .
    \end{align}
    Now square both sides:
    \begin{align}
        \norm{\bfdelta_i^{t}}^2 =\ &(1-\lambda_t)^2\norm{\bfdelta_i^{t-1}}^2 + (1-\lambda_t)^2\norm{\bfalpha_t}^2 + \lambda_t^2\norm{\bfxi_i^t}^2 \\\nonumber 
        &+ 2(1-\lambda_t)\lambda_t\,\bfdelta_i^{t-1}\cdot\bfxi_i^t - 2(1-\lambda_t)^2\bfdelta_i^{t-1}\cdot\bfalpha_t - 2(1-\lambda_t)\lambda_t\,\bfalpha_t\cdot\bfxi_i^t\ ,
    \end{align}
    where $\cdot$ is the dot-product.
    Consider the first cross term: 
    \begin{equation}
     \E\left[\bfdelta_i^{t-1}\cdot\bfxi_i^t\right] = \E\left[\bfdelta_i^{t-1}\cdot\E\left[\bfxi_i^t\mid\bfdelta_i^{t-1}\right]\right] = 0\ ,\qquad\mbox{due to Lemma~\ref{lem:ema1}}\ .
    \end{equation}
    Using Young's inequality for the other cross terms, \ie, $2ab \le \epsilon a^2 + \frac{1}{\epsilon}b^2$ for any $\epsilon > 0$, we can write:
    \begin{align}
        \E\left[\norm{\bfdelta_i^{t}}^2\right] \le\ &(1-\lambda_t)^2\,\E\left[\norm{\bfdelta_i^{t-1}}^2\right] + (1-\lambda_t)^2\norm{\bfalpha_t}^2 + \lambda_t^2\,\E\left[\norm{\bfxi_i^t}^2\right] \\\nonumber 
        &+ (1-\lambda_t)^2\left(\epsilon\,\E\left[\norm{\bfdelta_i^{t-1}}^2\right] + \frac{\norm{\bfalpha_t}^2}{\epsilon}\right) + (1-\lambda_t)\lambda_t\left(\epsilon\,\E\left[\norm{\bfxi_i^{t}}^2\right] + \frac{\norm{\bfalpha_t}^2}{\epsilon}\right)\ .
    \end{align}
    The term $\E\left[\norm{\bfxi_i^{t}}^2\right]$ can be bounded due to bounded delay $\tau$. By setting $\epsilon = \lambda_t$, the above can be simplified as:
    \begin{align}
        \E\left[\norm{\bfdelta_i^{t}}^2\right] &\le (1-\lambda_t)^2(1+\lambda_t)\,\E\left[\norm{\bfdelta_i^{t-1}}^2\right] + \frac{A}{\lambda_t}\norm{\bfalpha_t}^2\\\nonumber &\quad+ B \left(\norm{\bfalpha_t}^2+\lambda_t^2\right)\ ,\qquad\mbox{for some constants $A,B > 0$}\ ,\\\nonumber
        &\le (1-\lambda_t)\,\E\left[\norm{\bfdelta_i^{t-1}}^2\right] + \frac{A}{\lambda_t}\norm{\bfalpha_t}^2+ B \left(\norm{\bfalpha_t}^2+\lambda_t^2\right)\ ,\qquad\mbox{$0 < \lambda_t < 1$}\ .
    \end{align}
    Since $\frac{\norm{\bfalpha_t}^2}{\lambda_t}$, $\norm{\bfalpha_t}^2$, and $\lambda_t^2$ are diminishing, the above can be shown to be a contraction and therefore, $\E\left[\norm{\bfdelta_i^{t}}^2\right]\to 0$ following the classical stochastic approximation theory~\cite{robbins1951stochastic,robbins1971convergence}. Consequently, the consensus error vanishes:
    \begin{equation}
        \E\left[\norm{\bfdelta^{t}}^2 \right] = \E\left[\sum_i \norm{\bfdelta_i^t}^2 \right] = \sum_i\E\left[ \norm{\bfdelta_i^t}^2 \right]\to 0\ .
    \end{equation}
\end{proof}

Note that the \gls{EMA} update and the arguments in the theorem are elementwise, hence, they naturally extend to the \gls{PP} with sparse averaging setup. This, together with Theorem~\ref{thm:sparta1} and the convergence proof of \gls{SGD}, provides a theoretical justification for convergence for our method in the \gls{PP} setup with sparse averaging on expectation, despite a fixed delay. 

\section{Experiments}
\subsection{Experimental Setup}
We evaluate on four large-scale language modeling datasets, namely, \acrfull{WT}~\cite{wikitext}, \acrfull{BC}~\cite{bookcorpus}, \acrfull{OWT}~\cite{owt}, and \acrfull{FW}~\cite{penedo2024fineweb}.
For \wiki{}, we utilize the predefined training and validation splits; for \book{} and \owt{}, we randomly select $10\%$ of the training set as the held-out validation set; and for \fw{}, we use the streaming feature in Huggingface datasets and hold out 10k samples in the stream for validation. 
Our architecture is based on NanoGPT~\cite{Karpathy2022} with no dropout. The base model has a context length of 1024, an embedding dimension of 768, 12 attention heads, and 12 layers, with approximately 163M parameters. 
We use the GPT2 tokenizer~\cite{radford2019language} and train the model from scratch.
For configurations with 2 and 4 pipeline stages, equal number of layers are assigned to each stage, unless specified otherwise. For 8 stages, stages 2 -- 5 are assigned 2 layers, and others have 1 layer each. 

Across all experiments, we maintain a microbatch size of 8 per \gls{DP} replica, a learning rate of $3e\text{-}4$, a weight decay of $0.01$, and gradient clipping norm of $1$. For experiments with asynchronous \gls{PP}, NAdamW optimizer~\cite{nadam} with momentum $0.99$ is used as per~\cite{ajanthan2025asyncpp}. For synchronous \gls{PP} experiments, GPipe~\cite{huang2019gpipe} with  AdamW optimizer~\cite{loshchilov2017decoupled} is used, and the number of microbatches is set to 2. Each experiment is run for 30k iterations, with a linear warmup of 3k iterations starting from a learning rate of $1e\text{-}7$. Then, it is decayed to $3e\text{-}5$ following a cosine decay schedule. For our method, the \gls{EMA} variable $\rvd_i^t$ is initialized to zero.

For \diloco{}, the outer learning rate of 1 performs better (\ie, averaging instead of outer optimization step) in our 2D mesh with \asyncpp{}, and the outer-update interval is set to 10 steps in our experiments.

\paragraph{1B Model.}
We maintain the number of stages at 4 with stage assignment of $[1,3,4,4]$ number of layers, but increase the embedding dimension to 2304, with 24 attention heads. The learning rate warmup step is adjusted to 6k for all methods and run for 100k iterations. All other hyperparameters are the same as the base model.

\paragraph{Varying Configurations.}
When changing a criterion, all other criteria are kept to the default values: {\tt subset-size = 5\%}, {\tt async-delay = 10}, {\tt avg-interval = 1}, {\tt DP-replicas = 4}, {\tt PP-stages = 4}. While varying the averaging interval, the asynchronous delay is set to 1, such that the effective delay is equal to the averaging interval. 

\paragraph{Heterogeneous Setup.}
We use a $4\times 4$ mesh and compare three heterogeneous configurations, namely, H$2\times$: $[2, 1, 2, 3]$, H$5\times$: $[5, 3, 1, 11]$, and H$10\times$: $[10, 8, 1, 21]$, along with the homogeneous setup: H$1\times$: $[1, 1, 1, 1]$, with the provided relative device speeds. 

\subsection{Additional Results}
We provide additional validation loss trajectories for various methods with \asyncpp{} in \figref{fig:main0} and synchronous \gls{PP} in \figref{fig:main1}, on multiple datasets corresponding to the results in the main paper in \figref{fig:datasets}, and for compute optimal training for the base model in \figref{fig:co}. Furthermore, we provide consensus error plots for our method confirming the theory in \figref{fig:error}, validation loss vs time plot for the 1B model in \figref{fig:1b1}.

\paragraph{Gain in Wall-clock Time.}
Since we simulate the asynchronous \gls{DP} setup via buffering, and due to implementation differences between our method and FullSync, their practical wall-clock times are not comparable. However, the theoretical gain in wall-clock time per iteration due to asynchronous sparse averaging is $O(d^2/B)$, where $d$ is the embedding dimension, and $B$ is the bandwidth, as we fully eliminate \gls{DP} overhead. Since the staleness is in update steps, the ratio between allowed time and the data transfer volume is $O(Bd^3\tau/pd^2) = O(Bd\tau/p)$, where $\tau$ is the allowed delay, $p$ is the subset size, and note the compute time is approximately cubic in $d$~\cite{ryabinin2023swarm}. Therefore, our method scales favourably for large models. Note, \asyncpp{}~\cite{ajanthan2025asyncpp} further improves this wall-clock time gain.

For completeness, we show the empirical time savings that can be achieved by our asynchronous sparse averaging in Table~\ref{tab:time}. This clearly shows just by making sparse averaging asynchronous with 1-step delay, we can achieve a speed-up of $1.5\times$ -- $3.7\times$ depending on the mesh configuration, with larger meshes yielding better speed-ups.


\paragraph{Comparison with Other Communication Efficient Methods.}
We compare with some existing methods that compress the \gls{DP} communication via quantization and TopK sample in \figref{fig:quant1}. The quantization results show sparse averaging methods (even with delay) are more robust to quantization, allowing further reduction in DP communication requirements. Finally, we compare with the concurrent work of eager updates for \diloco{}~\cite{kale2025eager} in \figref{fig:eager}, showing that our method strictly generalizes it.

\begin{figure*}[t]
    \centering
    \begin{subfigure}{0.45\linewidth}
    \includegraphics[width=0.99\linewidth]{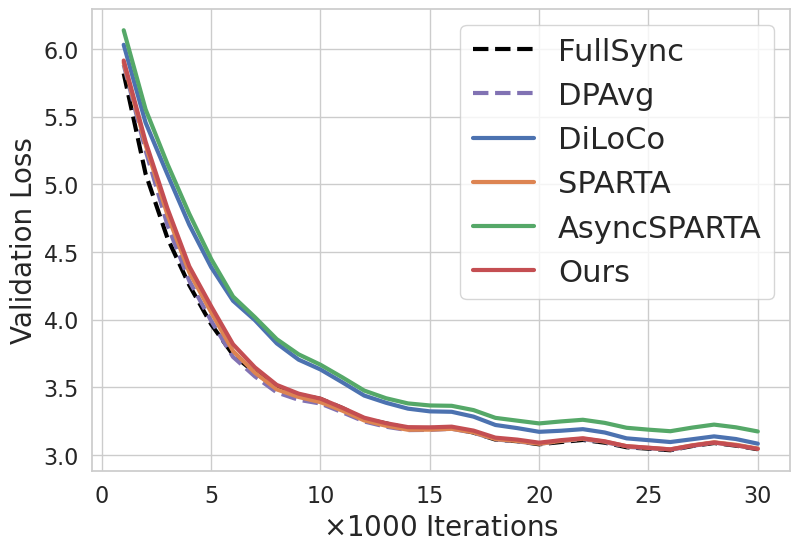}
    \caption{$2\times 4$ mesh}
    \end{subfigure}%
    \begin{subfigure}{0.45\linewidth}
    \includegraphics[width=0.99\linewidth]{images/plot-dp/wiki-pp4-dp2-ppasync-eval_loss.png}
    \caption{$4\times 2$ mesh}
    \end{subfigure}
    \begin{subfigure}{0.45\linewidth}
    \centering
    \includegraphics[width=0.99\linewidth]{images/plot-dp/wiki-pp8-dp2-ppasync-eval_loss.png}
    \caption{$8\times 2$ mesh}
    \end{subfigure}%
    \begin{subfigure}{0.45\linewidth}
    \includegraphics[width=0.99\linewidth]{images/plot-dp/wiki-pp4-dp2-moe-ppasync-eval_loss.png}
    \caption{$4\times 2$ mesh, \moe{}}
    \end{subfigure}
    \caption{\em Results on \wiki{} with varying mesh configurations and architectures with \asyncpp{} for all methods except FullSync. In all scenarios, our method matches the performance of the fully synchronous method, while outperforming the fully asynchronous baseline \asyncsparta{}.
    }
    \label{fig:main0}
\end{figure*}

\begin{figure}[t]
    \centering
    \begin{subfigure}{0.45\linewidth}
    \includegraphics[width=0.99\linewidth]{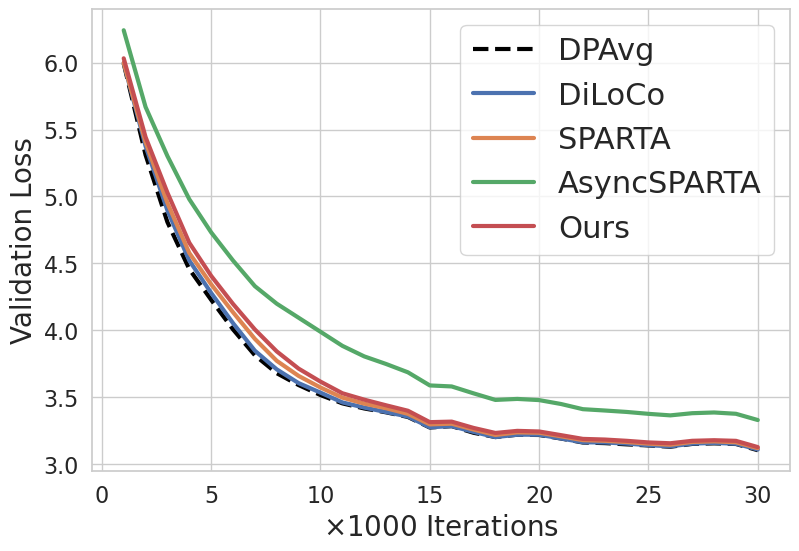}
    \caption{$2\times 4$ mesh}
    \end{subfigure}%
    \begin{subfigure}{0.45\linewidth}
    \includegraphics[width=0.99\linewidth]{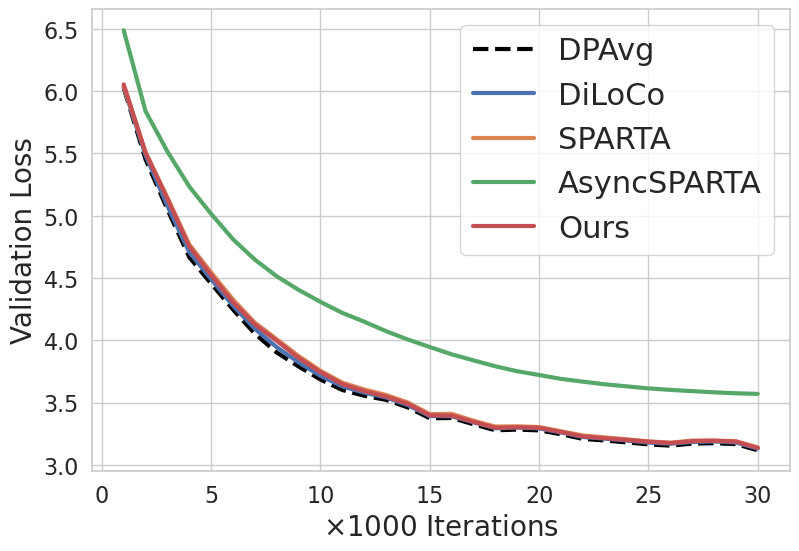}
    \caption{$4\times 2$ mesh}
    \end{subfigure}
    \begin{subfigure}{0.45\linewidth}
    \centering
    \includegraphics[width=0.99\linewidth]{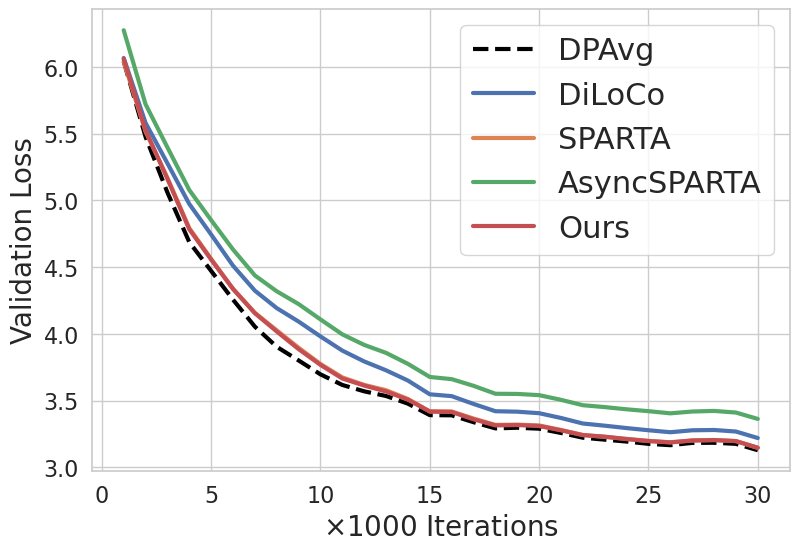}
    \caption{$8\times 2$ mesh}
    \end{subfigure}%
    \begin{subfigure}{0.45\linewidth}
    \includegraphics[width=0.99\linewidth]{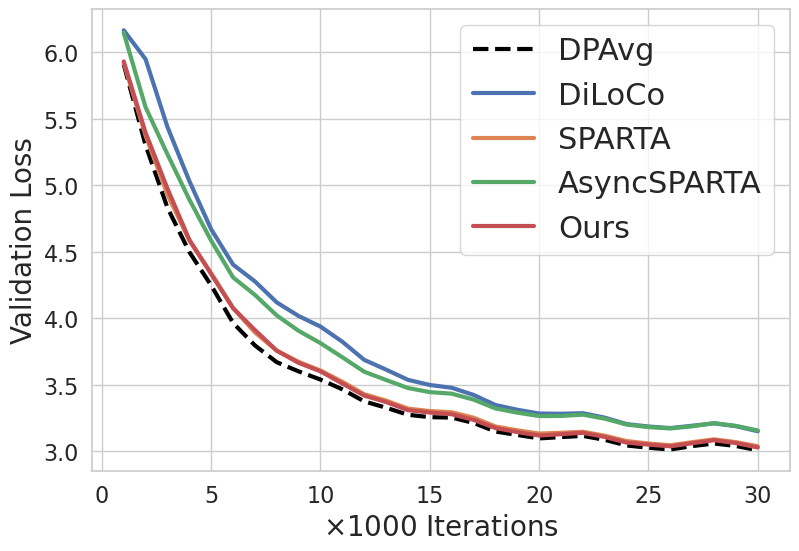}
    \caption{$4\times 2$ mesh, \moe{}}
    \end{subfigure}
    
    \caption{\em Results on \wiki{} with varying mesh configurations and architectures with synchronous \gls{PP}. In all scenarios, our method matches the performance of the fully synchronous method, same as the case with \asyncpp{}.
    }
    \label{fig:main1}
\end{figure}

\begin{figure}[t]
    \centering
    \begin{subfigure}{0.45\linewidth}
    \includegraphics[width=0.99\linewidth]{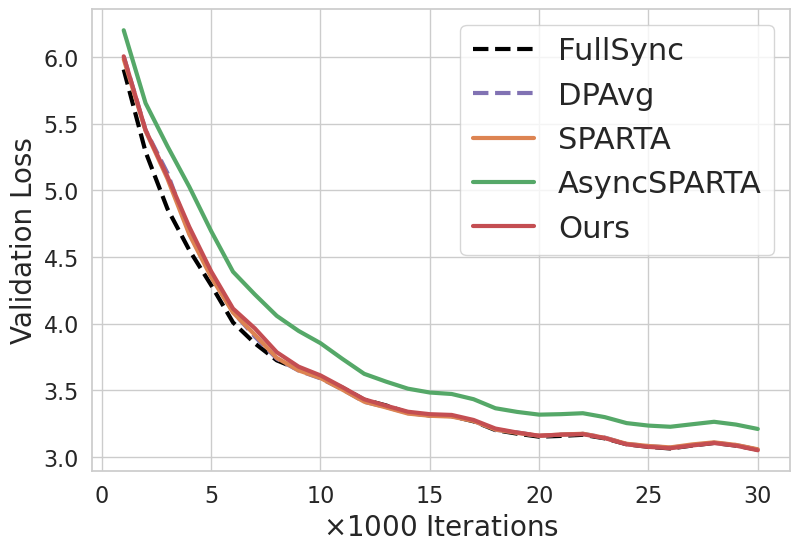}
    \caption{\wiki{}}
    \end{subfigure}%
    \begin{subfigure}{0.45\linewidth}
    \includegraphics[width=0.99\linewidth]{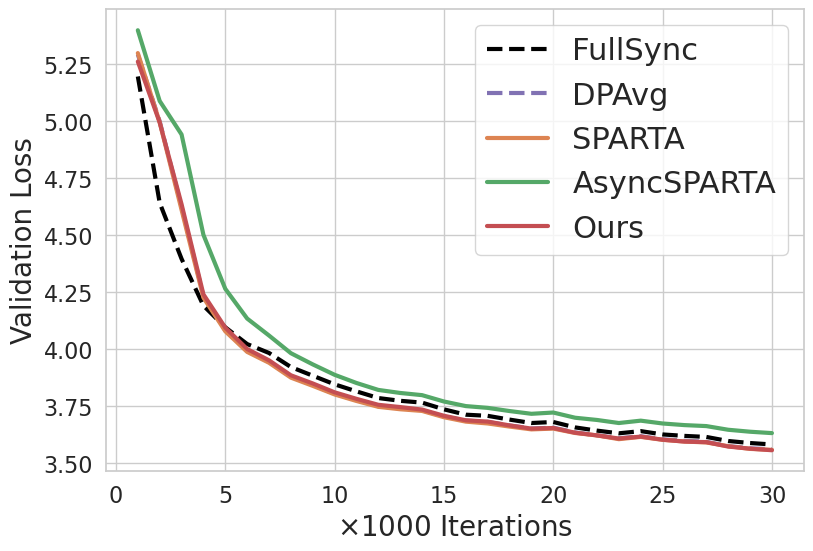}
    \caption{\book{}}
    \end{subfigure}
    \begin{subfigure}{0.45\linewidth}
    \centering
    \includegraphics[width=0.99\linewidth]{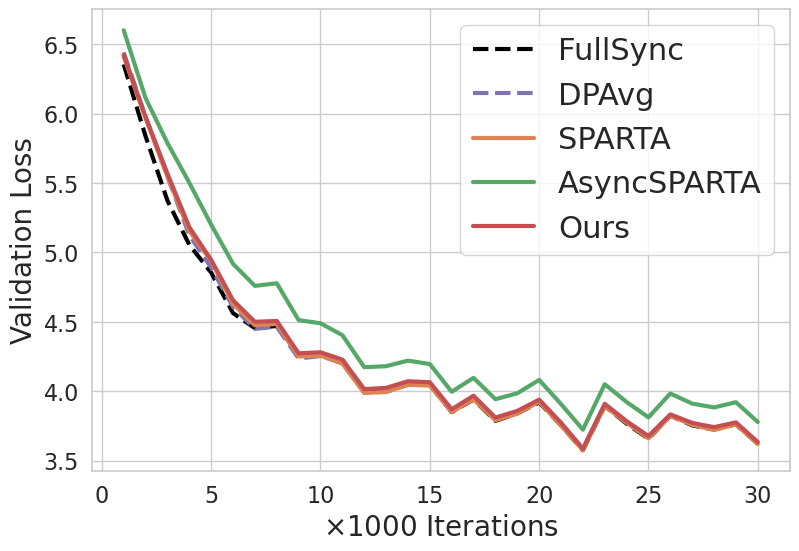}
    \caption{\owt{}}
    \end{subfigure}%
    \begin{subfigure}{0.45\linewidth}
    \centering
    \includegraphics[width=0.99\linewidth]{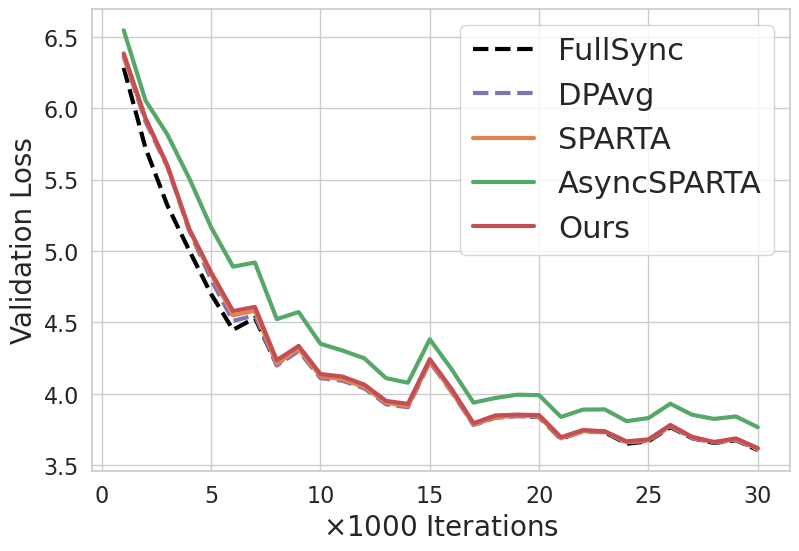}
    \caption{\fw{}}
    \end{subfigure}
    \caption{\em Results on different datasets for $4\times 2$ mesh. Our method performs similarly to FullSync in all scenarios, demonstrating virtually no performance degradation due to staleness or sparse averaging.
    }
    \label{fig:datasets}
\end{figure}

\begin{figure}[t]
    \centering
    \begin{subfigure}{0.45\linewidth}
    \includegraphics[width=0.99\linewidth]{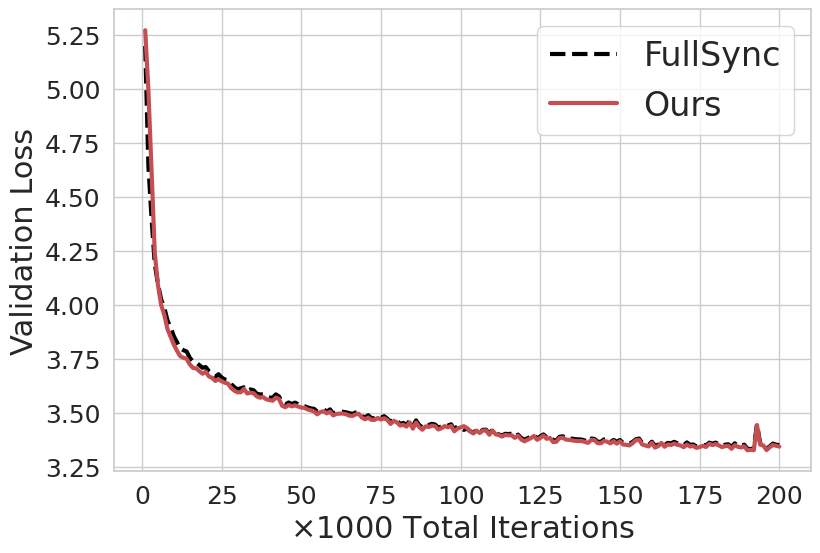}
    \caption{\book{}}
    \end{subfigure}%
    \begin{subfigure}{0.44\linewidth}
    \includegraphics[width=0.99\linewidth]{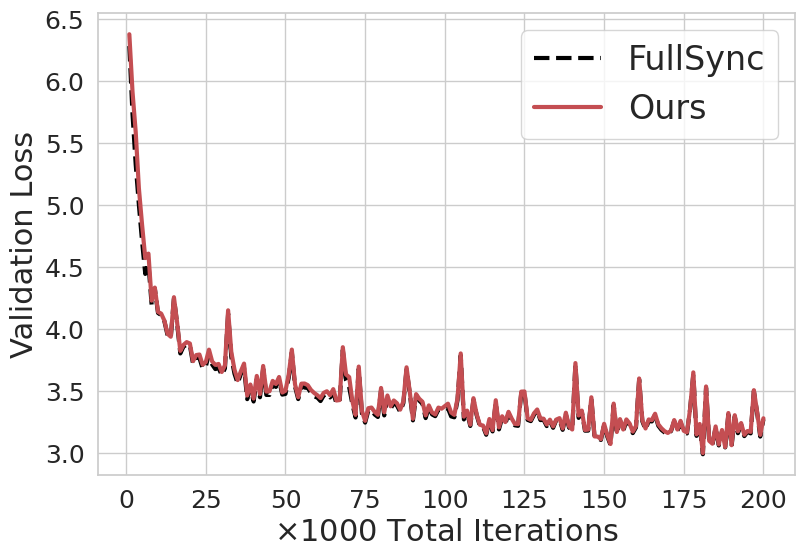}
    \caption{\fw{}}
    \end{subfigure}
    \caption{\em Compute optimal training~\cite{hoffmann2022training} for \book{} and \fw{} for the base model with $4\times 2$ mesh. Our method is nearly identical to FullSync for longer training as well, validating its merits. The final validation perplexities are, for \book{}, {\em FullSync: 28.02}, and {\em Ours: 27.86} and for \fw{}, {\em FullSync: 19.92}, and {\em Ours: 20.10}. The validation curve for \fw{} is noisy for both methods, probably due to the way the validation set is selected from the Huggingface stream.
    }
    \label{fig:co}
\end{figure}

\begin{figure}[t]
    \centering
    \begin{subfigure}{0.45\linewidth}
    \includegraphics[width=0.99\linewidth]{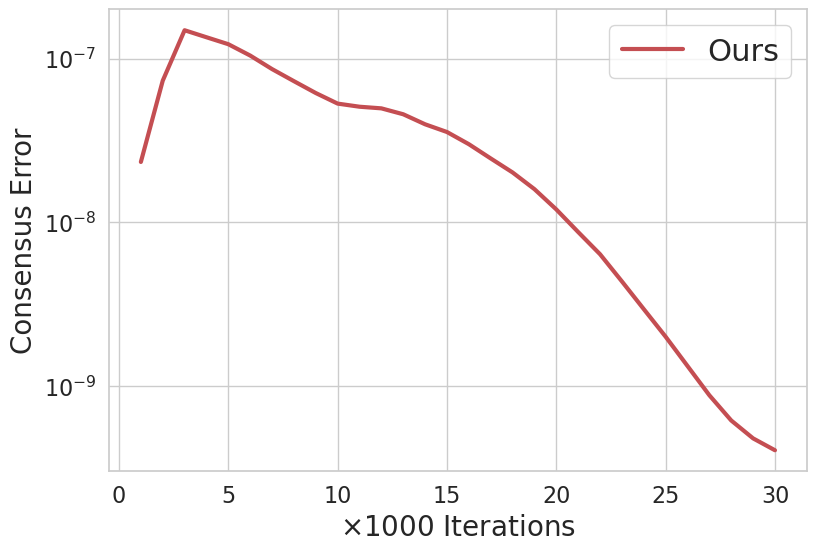}
    \end{subfigure}%
    \begin{subfigure}{0.45\linewidth}
    \includegraphics[width=0.99\linewidth]{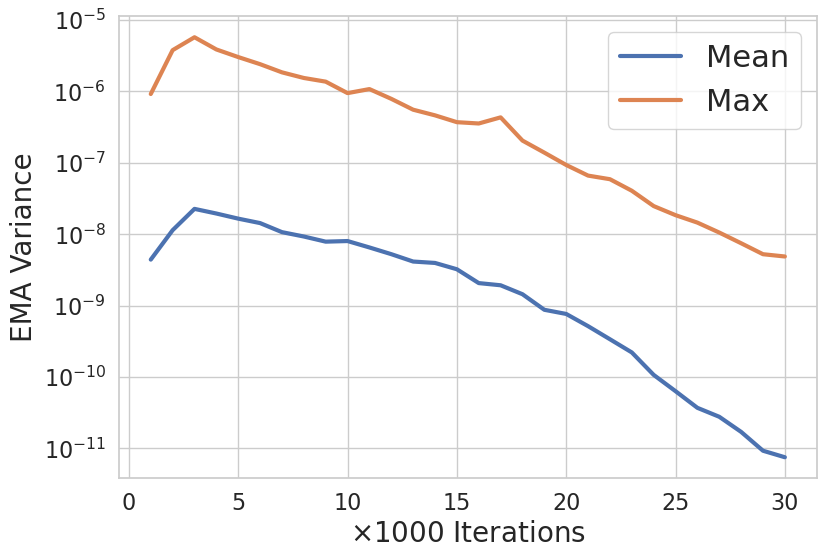}
    \end{subfigure}
    \begin{subfigure}{0.45\linewidth}
    \includegraphics[width=0.99\linewidth]{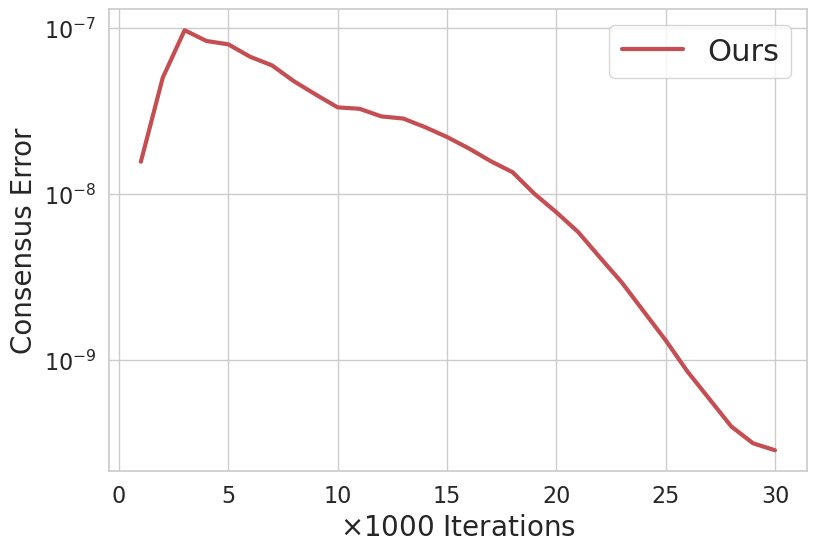}
    \end{subfigure}%
    \begin{subfigure}{0.45\linewidth}
    \includegraphics[width=0.99\linewidth]{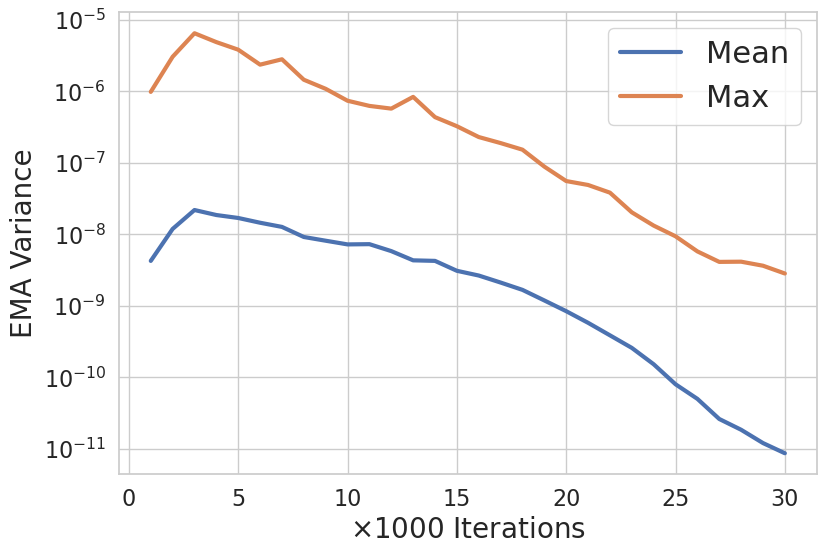}
    \end{subfigure}
    \caption{\em Mean consensus error $\frac{1}{md}\norm{\bfdelta^t}^2$ as in \eqref{eq:notations} and variance between EMA estimates ($\rvd_i^t$) in each replica, for our method on the $2\times 4$ (top) and $4\times 2$ mesh (bottom). For EMA, the mean and max across the model dimension are shown. Results perfectly align with the theory that independent EMA estimates in each replica converge to the expected value (\ie, variance vanishes), and the consensus error for our method diminishes to zero.
    }
    \label{fig:error}
\end{figure}

\begin{figure}[t]
    \centering
    \begin{subfigure}{0.5\linewidth}
    \includegraphics[width=0.99\linewidth]{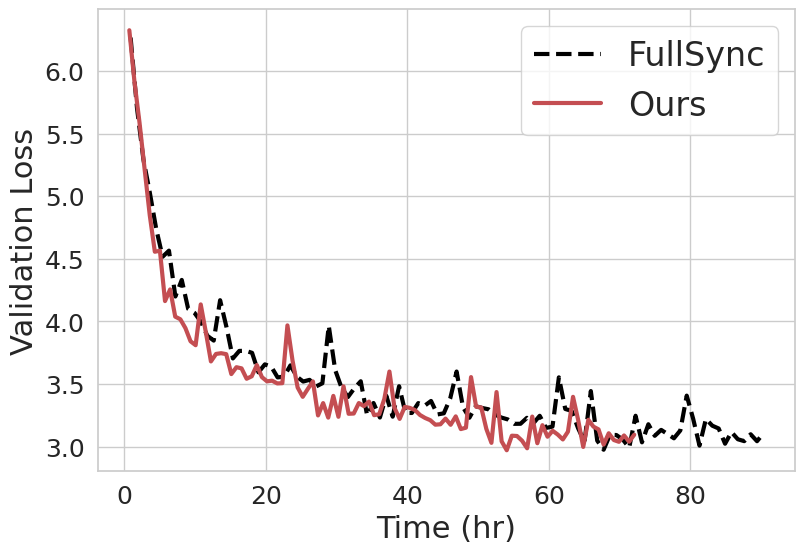}
    \end{subfigure}
    \caption{\em Validation loss vs time for 1B model. Even with suboptimal implementation, fast interconnects, and {\bf not} considering the time gains due to asynchronous updates, FullSync is about 20\% slower than our method.
    }
    \label{fig:1b1}
\end{figure}

\begin{table}   
\centering
\small   
    \begin{tabular}{c|cccc}
        \toprule
        \multirow{2}{*}{PP $\times$ DP Mesh}   & \multirow{2}{*}{AWS Instances} & Communication Time & Forward-Backward time & \multirow{2}{*}{Speed-up} \\
        & &  for tail (ms) &  for tail (ms) &  \\ 
        \midrule
        $4 \times 2$ & $1 \times$ p4d.24 & $515 \pm 35$ & $320 \pm 25$ & $\sim2.6\times$\\
        $4 \times 4$ & $1 \times$ p4d.24 & $920 \pm 150$&$590 \pm 35$ &$\sim2.6\times$\\
        $4 \times 6$ & $2 \times$ p4d.24 & $925 \pm 40$&$580 \pm 15$ &$\sim2.6\times$\\
        $4 \times 8$ & $2 \times$ p4d.24 & $1240 \pm 70$& $600 \pm 30$&$\sim3.1\times$\\
        $4 \times 12$ & $3 \times$ p4d.24 & $1610 \pm 80$& $595 \pm 25$&$\sim3.7\times$\\
        \midrule
        $2 \times 4$ & $1 \times$ p4d.24 & $540 \pm 55$& $470 \pm 35$&$\sim2.2\times$\\
        $4 \times 4$ & $1 \times$ p4d.24 & $920 \pm 150$&$590 \pm 35$&$\sim2.6\times$\\
        $6 \times 4$ & $2 \times$ p4d.24 & $660 \pm 120$&$480 \pm 75$&$\sim2.4\times$\\
        $8 \times 4$ & $2 \times$ p4d.24 & $400 \pm 65$& $815 \pm 35$&$\sim1.5\times$\\
        $12 \times 4$ & $3 \times$ p4d.24 & $735 \pm 65$& $355 \pm 85$&$\sim3.1\times$\\
    
        \bottomrule
    \end{tabular}
    \vspace{1ex}
    \caption{\em We report the \acrshort{SPARTA} communication time (\ie, averaging 5\% of the parameters) for tail stage for the 12-layer base model, that would be masked by our asynchronous sparse averaging for various configurations above and compare it with the forward-backward times. This shows the empirical speed-up that could be obtained by our method, which ranges from $1.5\times$ -- $3.7\times$ and improves with larger mesh. This speed-up only considers asynchronous \gls{DP} and any benefits due to \asyncpp{} is additional to this.
    }
    \label{tab:time}
\end{table}


\begin{figure}[t]
    \centering
    \begin{subfigure}{0.32\linewidth}
    \includegraphics[width=0.99\linewidth]{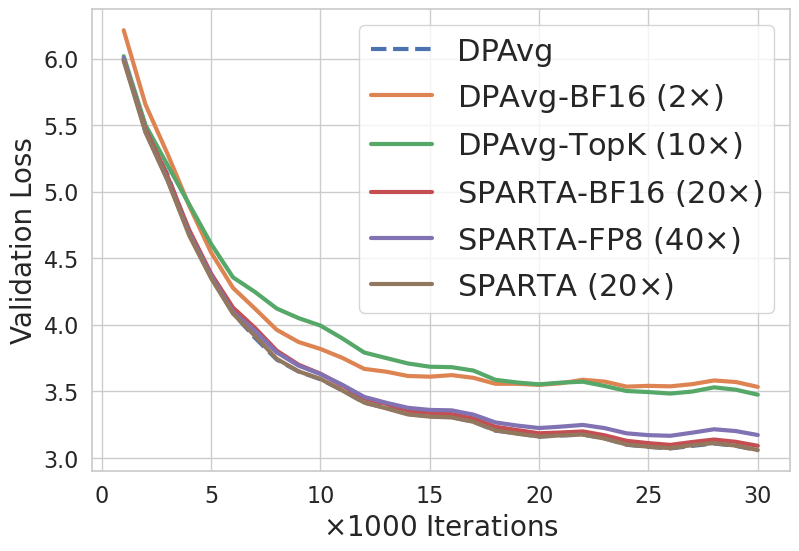}
    \end{subfigure}%
    \begin{subfigure}{0.32\linewidth}
    \includegraphics[width=0.99\linewidth]{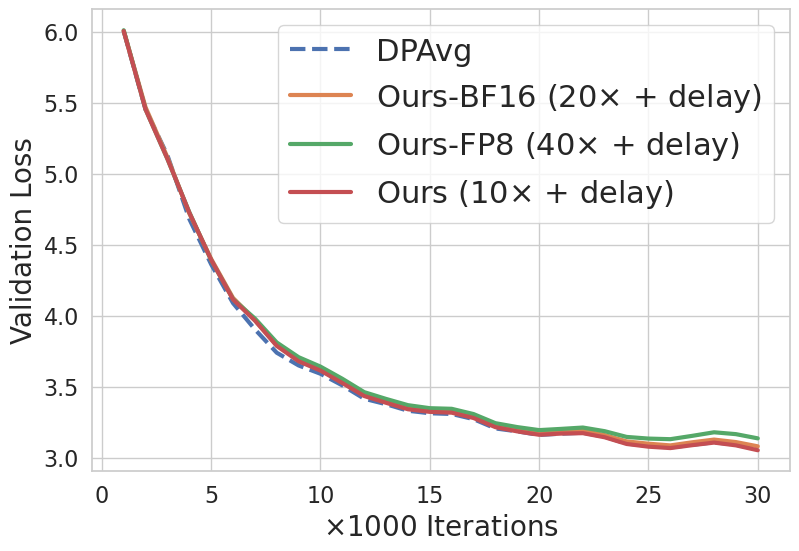}
    \end{subfigure}%
    \begin{subfigure}{0.33\linewidth}
    \includegraphics[width=0.99\linewidth]{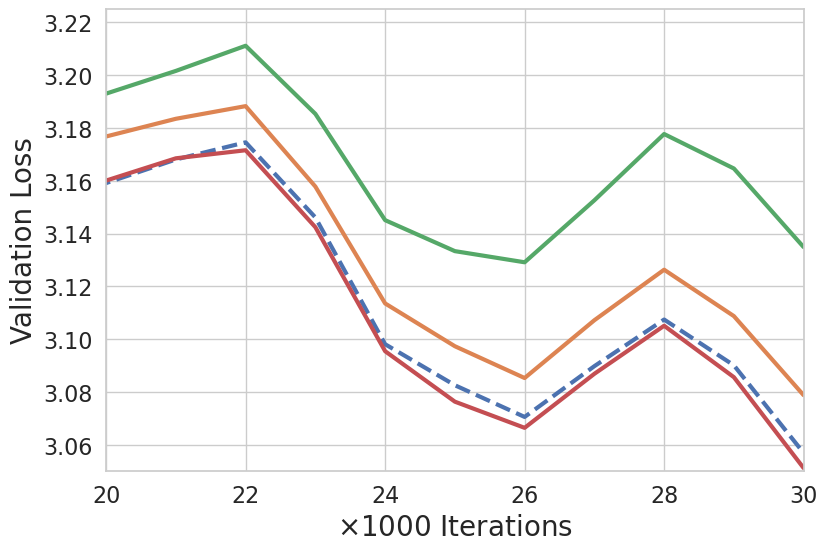}
    \end{subfigure}
    \caption{\em Effect of compression for different methods, for the base model with $4\times 2$ mesh on \wiki{}. {\bf Left:} Both quantisation and TopK sampling based on weight magnitude (instead of random) degrade the performance for \DP{} (\DP{}-{\em FP8} did not converge). However, SPARTA is robust to quantization. {\bf Right: } Similar to SPARTA, our method is robust to quantization even with a 10-step delay, and the degradation is minimal. TopK sampling did not converge for our method, aligning with our insight that the sampling needs to be unbiased. \DP{} is even more sensitive to quantization with delay, and \DP{}-{\em BF16} with delay did not converge.    
    The robustness of sparse averaging (even with delay) to quantization is intriguing, and it may be explained by the fact that since the quantization error is introduced only for a small subset (5\% in this case) at each iteration, the effect of quantization on training is negligible. However, this warrants further study, which is beyond the scope of this work.
    }
    \label{fig:quant1}
\end{figure}

\begin{figure}[t]
    \centering
    \begin{subfigure}{0.32\linewidth}
    \includegraphics[width=0.99\linewidth]{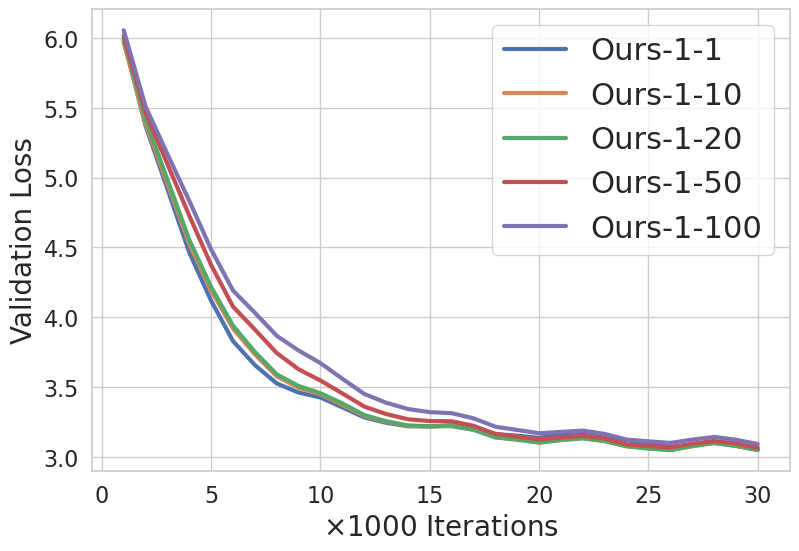}
    \end{subfigure}%
    \begin{subfigure}{0.33\linewidth}
    \includegraphics[width=0.99\linewidth]{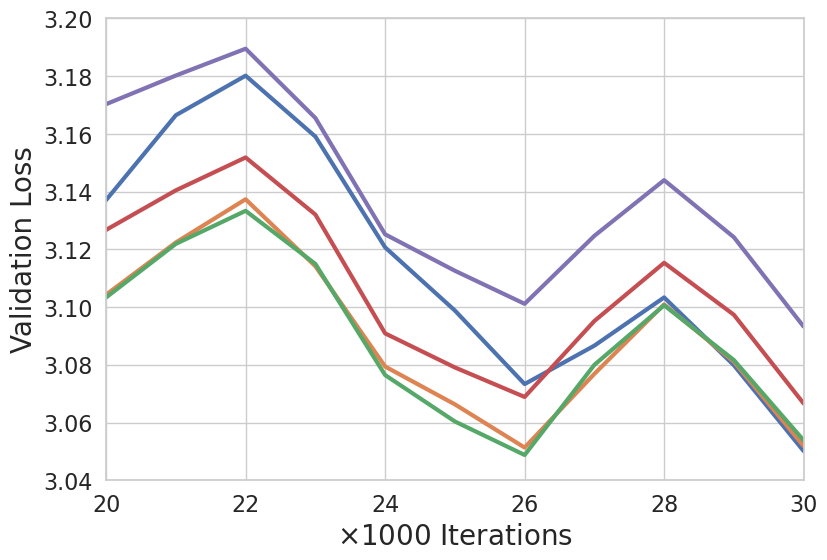}
    \end{subfigure}%
    \begin{subfigure}{0.32\linewidth}
    \includegraphics[width=0.99\linewidth]{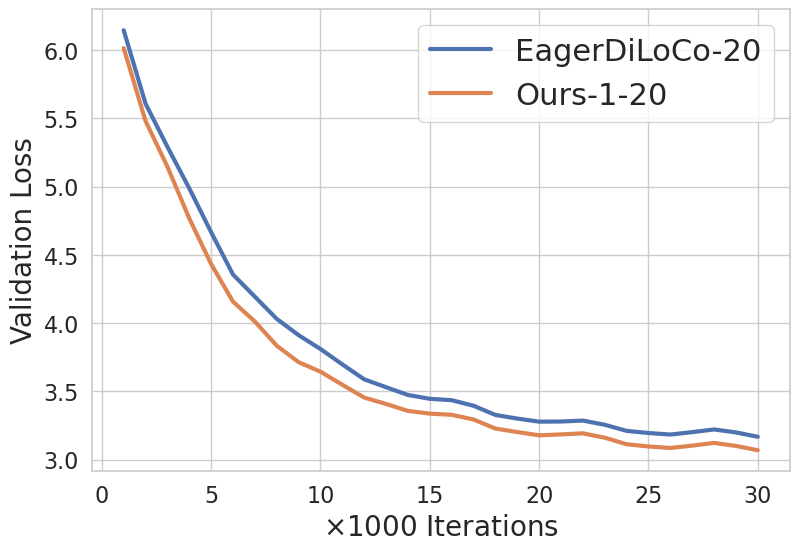}
    \end{subfigure}
    \caption{\em Results in an equivalent setup to Eager\diloco{}~\cite{kale2025eager}. Ours-1-X denotes our method with subset size 1 (full averaging) and varying averaging interval (\ie, delay for asynchronous \gls{DP}). On the right, we compare against Eager\diloco{} for 20 inner-steps (\ie, delay). Our EMA approach outperforms eager updates, confirming the strict generality, and the performance varies only slightly with different numbers of inner steps.
    }
    \label{fig:eager}
\end{figure}